\newcommand{\diff}{\mathrm{d}}
\newcommand{\R}{\mathbb{R}}
\newcommand{\expec}[1]{\mathbb{E}\left[#1\right]}
\newtheorem{theorem}{Theorem}[section]
\newtheorem{lemma}{Lemma}[section]
\newtheorem{proposition}{Proposition}[section]
\theoremstyle{definition}
\newtheorem{definition}{Definition}[section]
\newtheorem{assump}{Assumption}%[section]
\title{Implicit Compressibility of Overparametrized Neural Networks Trained with Heavy-Tailed SGD}
\definecolor{apricot}{rgb}{0.98, 0.81, 0.69}
\author{\name{Yijun Wan}$^*$ \email{wan.yijun@huawei.com}\\
 \addr Paris Research Center, Huawei Technologies France
  \AND 
  \name{Melih Barsbey}$^*$ \email{melih.barsbey@boun.edu.tr}\\
 \addr Bo\u{g}aziçi University, \.{I}stanbul, Turkey
  \AND 
  \name{Abdellatif Zaidi} 
\email{abdellatif.zaidi@univ-eiffel.fr}\\
\addr Universit\'e Gustave Eiffel, France and Paris Research Center, Huawei Technologies France
\AND \name{Umut \c{S}im\c{s}ekli} \email{umut.simsekli@inria.fr} \\
 \addr Inria, CNRS, Ecole Normale Sup\'{e}rieure, 
PSL Research University, Paris, France
}
\let\old@makefntext\@makefntext
\renewcommand{\@makefntext}[1]{%
  \@setpar{\@@par\@tempdima \hsize 
           \advance\@tempdima-15pt\parshape \@ne 15pt \@tempdima}\par
           \parindent 2em\noindent \hbox to \z@{\hss \@thefnmark\hfil}#1%
}
\begin{document}

\renewcommand{\thefootnote}{$^*$}

\footnotetext[1]{Equal contribution.}
\maketitle
\makeatletter
\let\@makefntext\old@makefntext % Revert \@makefntext to original definition
\makeatother
\renewcommand{\thefootnote}{\arabic{footnote}}

\begin{abstract}
Neural network compression has been an increasingly important subject, not only due to its practical relevance, but also due to its theoretical implications, as there is an explicit connection between compressibility and generalization error. Recent studies have shown that the choice of the hyperparameters of stochastic gradient descent (SGD) can have an effect on the compressibility of the learned parameter vector. These results, however, rely on unverifiable assumptions and the resulting theory does not provide a practical guideline due to its implicitness. In this study, we propose a simple modification for SGD, such that the outputs of the algorithm will be provably compressible without making any nontrivial assumptions. We consider a one-hidden-layer neural network trained with SGD, and show that if we inject additive heavy-tailed noise to the iterates at each iteration, for \emph{any} compression rate, there exists a level of overparametrization such that the output of the algorithm will be compressible with high probability. To achieve this result, we make two main technical contributions: (i) we prove a ``propagation of chaos'' result for a class of heavy-tailed stochastic differential equations, and (ii) we derive error estimates for their Euler discretization. Our experiments suggest that the proposed approach not only achieves increased compressibility with various models and datasets, but also leads to robust test performance under pruning, even in more realistic architectures that lie beyond our theoretical setting.
%# with a slight compromise from the training and test error. \umut{need to update with new highlights}
\end{abstract}
\section{Introduction}

Obtaining compressible neural networks has become an increasingly important task in the last decade, and it has essential implications from both practical and theoretical perspectives. From a practical point of view, as the modern network architectures might contain an excessive number of parameters, compression has a crucial role in terms of deployment of such networks in resource-limited environments \citep{oneillOverviewNeuralNetwork2020,blalock2020state}. On the other hand, from a theoretical perspective, several studies have shown that compressible neural networks should achieve a better generalization performance due to their lower-dimensional structure \citep{arora2018stronger,suzuki2018spectral,Suzuki2020Compression,hsu2021generalization,barsbey2021heavy,sefidgaran2022rate}.

Despite their evident benefits, it is still not yet clear how to obtain compressible networks with provable guarantees. In an empirical study, \citet{frankle2018lottery} introduced the ``lottery ticket hypothesis'', which indicated that a randomly initialized neural network will have a sub-network that can achieve a performance that is comparable to the original network; hence, the original network can be compressed to the smaller sub-network. This empirical study has formed a fertile ground for subsequent theoretical research, which showed that such a sub-network can indeed exist (see e.g., \citealp{malach2020proving,burkholz2021existence,da2022proving}). However, it is not clear how to develop an algorithm that can find it in a feasible amount of time.

Another line of research has developed methods to enforce compressibility of neural networks by using sparsity enforcing regularizers (see e.g.,  \citealp{papyan2018theoretical,aytekin2019compressibility,chen2020neural,lederer2023statistical,kengne2023sparse}). While they have led to interesting algorithms, these typically require higher computational resources due to the increased complexity of the problem. On the other hand, due to the nonconvexity of the overall objective, it is also not trivial to provide theoretical guarantees for the compressibility of the resulting network weights. 
% \umut{verify this}.  

Recently it has been shown that the training dynamics can have an influence on the compressibility of the algorithm output. In particular, motivated by the research that produced empirical and theoretical evidence that heavy-tails might arise in stochastic optimization (see e.g., \citealp{martin2019traditional,pmlr-v97-simsekli19a,csimcsekli2019heavy,csimcsekli2020fractional,zhou2020towards,zhang2020adaptive,camuto2021asymmetric}), \citet{barsbey2021heavy} and \citet{shin2021compressing} showed that the network weights learned by stochastic gradient descent (SGD) will be compressible if we assume that they are heavy-tailed and that there exists a certain form of statistical independence within the network weights. These studies illustrated that, even \emph{without} any modification to the optimization algorithm, the learned network weights can be compressible depending on the algorithm hyperparameters (such as the step size, i.e. learning rate, or the batch size). Even though the tail and independence conditions were recently relaxed by \citet{lee2022deep}, the resulting theory relies on unverifiable assumptions, and hence does not provide a practical guideline.

In this paper, we focus on single-hidden-layer neural networks with a fixed second layer (i.e., the setting used in previous work, \citealp{PoC_gaussian}) trained with vanilla SGD, and show that, when the iterates of SGD are simply perturbed by heavy-tailed noise with infinite variance (similar to the settings considered in \citealp{csimcsekli2017fractional,nguyen2019non,csimcsekli2020fractional,huang2021approximation,zhang2023ergodicity}), the assumption made by \citet{barsbey2021heavy} in effect holds. More precisely, denoting the number of hidden units by $n$ and the step size of SGD by $\eta$, we consider the \emph{mean-field limit}, where $n$ goes to infinity and $\eta$ goes to zero. We show that in this limiting case, the columns of the weight matrix will be independent and identically distributed (i.i.d.) with a common \emph{heavy-tailed} distribution.
% More precisely, we represent SGD as a discretization of a stochastic differential equation (SDE), which has both a Gaussian component (corresponding to minibatch noise) and a heavy-tailed component (injected noise).
Then, we focus on the finite $n$ and $\eta$ regime and we prove that for \emph{any} compression ratio (to be precised in the next section), there exists a number $N$, such that if $n \geq N$ and $\eta$ is sufficiently small, the network weight matrix will be compressible with high probability. Figure~\ref{plt:illus} illustrates the overall approach and precises our notion of compressibility. 

\begin{figure}
\centering
% \vspace{-0.65cm}
\includegraphics[width=0.65\textwidth]{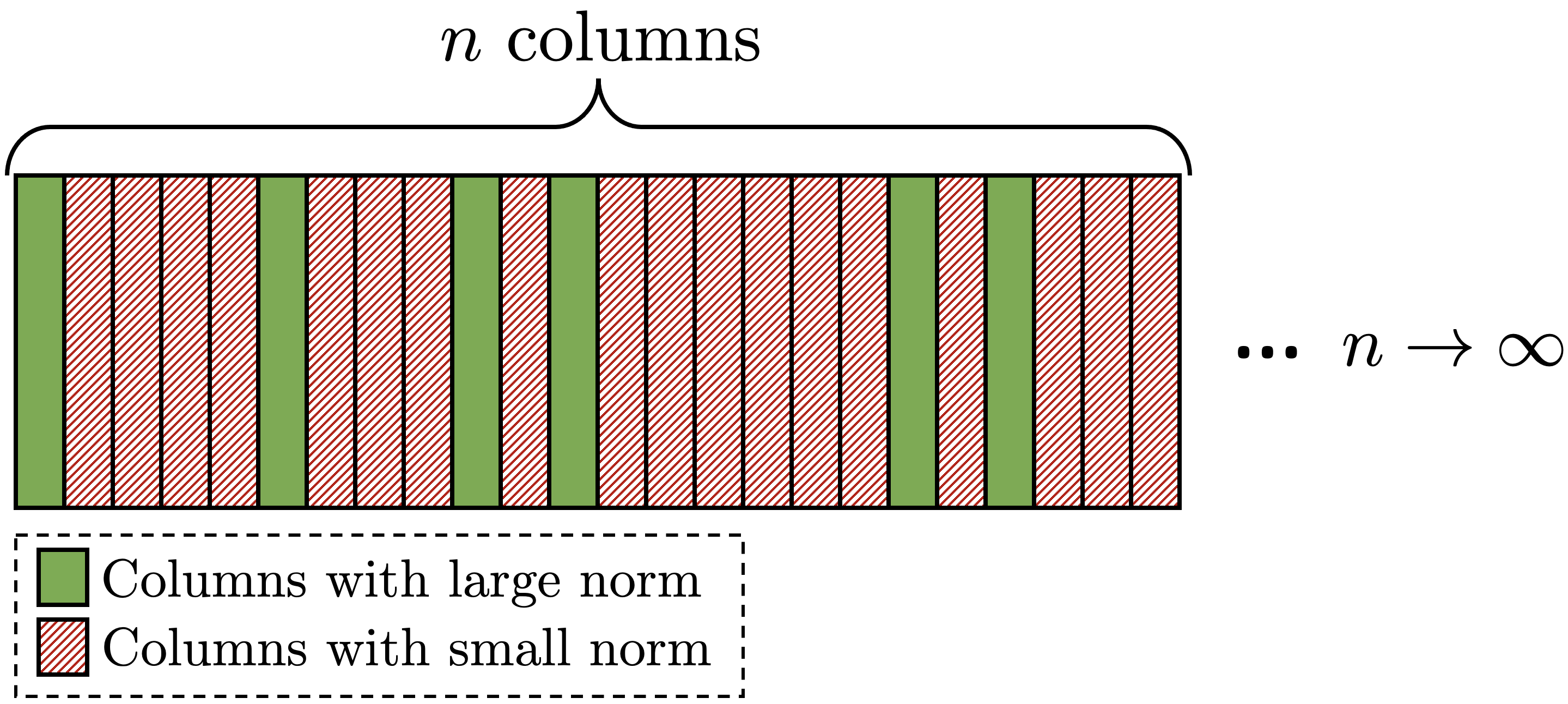}
% \vspace{-0.5cm}
  \caption{The illustration of the overall approach. We consider a one-hidden-layer neural network with $n$ hidden units, which results in a weight matrix of $n$ columns (first layer). We show that, when SGD is perturbed with heavy-tailed noise, as $n\to \infty$, each column will follow a multivariate heavy-tailed distribution in an i.i.d.\ fashion. This implies that a small number of columns will have significantly larger norms compared to the others; hence, the norm of the overall weight matrix will be determined by such columns \citep{compressible_distribution}. As a result, the majority can be removed (i.e., set to zero), which we refer to as compressibility.}
 \vspace{-0.15cm}
\label{plt:illus}
\end{figure}

To prove our compressibility result, we make two main technical contributions. We first consider the case where the step size $\eta \to 0$, for which the SGD recursion perturbed with heavy-tailed noise yields a \emph{system} of heavy-tailed stochastic differential equations (SDE) with $n$ particles. As our first technical contribution, we show that as $n \to \infty$ this particle system converges to a mean-field limit, which is a McKean-Vlasov-type SDE that is driven by a heavy-tailed process \citep{jourdain2007nonlinear,liang2021exponential,Cavallazzi}. For this convergence, we obtain a rate of $n^{-1/2}$, which is faster than the best known rates, as recently proven by \citet{Cavallazzi}. This result indicates that a \emph{propagation of chaos} phenomenon \citep{sznitman1991topics} emerges\footnote{Here, the term chaos refers to statistical independence: when the particles are initialized independently, they stay independent through the whole process even though their common distribution might evolve.}: in the mean-field regime, the columns of the weight matrix will be i.i.d.\ and heavy-tailed due to the injected noise.

Next, we focus on the Euler discretizations of the particle SDE to be able to obtain a practical, implementable algorithm. As our second main technical contribution, we derive \emph{strong-error} estimates for the Euler discretization \citep{kloeden1992stochastic} and show that for sufficiently small $\eta$, the trajectories of the discretized process will be close to the one of the continuous-time SDE, in a precise sense. This result is similar to the ones derived for vanilla SDEs (e.g., \citealp{mikulevivcius2018rate}) and enables us to incorporate the error induced by using a finite step size $\eta$ to the error of the overall procedure.

Equipped with these results, we finally prove a high-probability compression bound by invoking \citep{compressible_distribution,amini2011compressibility}, which essentially shows that an i.i.d.\ sequence of heavy-tailed random variables will have a small proportion of elements that will dominate the whole sequence in terms of absolute values (to be stated formally in the next section). This establishes our main contribution.
Here, we shall note that similar mean-field regimes have already been considered in machine learning (see e.g., \citealp{mei2018mean,chizat2018global,rotskoff2018trainability,jabir2019mean,mei2019mean,PoC_gaussian,sirignano2022mean}). However, these studies all focused on particle SDE systems that either converge to deterministic systems or that are driven by Brownian motion. While they have introduced interesting analysis tools, we cannot directly benefit from their analysis in this paper, since the heavy-tails are crucial for obtaining compressibility, and the Brownian-driven SDEs cannot produce heavy-tailed solutions in general. Hence, as we consider heavy-tailed SDEs in this paper, we need to use different techniques to prove mean-field limits, compared to the prior art in machine learning.  

To validate our theory, we conduct experiments with various neural networks and datasets. Our results show that, even with a minor modification to SGD (i.e., injecting heavy-tailed noise), the proposed approach
can achieve compressibility with a negligible computational overhead and with a slight compromise from the training and test error. Our findings further demonstrate that our methodology generalizes beyond our theoretical results, and produces models that are not only compressible, but robust in terms of test performance in fully connected neural networks with single or multiple hidden layers, and convolutional neural networks, implying that our approach is indeed a promising one in terms of its practical implications.

\section{Preliminaries and Technical Background}

\textbf{Notation.} For a vector $u\in\R^d$, denote by $\|u\|$ its Euclidean norm, and by $\|u\|_p$ its $\ell_p$ norm. For a function $f\in C(\R^{d_1}, \R^{d_2})$, denote by $\|f\|_\infty:=\sup_{x\in\R^{d_1}}\|f(x)\|$ its $L^\infty$ norm. For a family of $n$ (or infinity) vectors, the indexing $\cdot^{i,n}$ denotes the $i$-th vector in the family. In addition, for random variables, $\overset{(\mathrm{d})}{=}$ means equality in distribution, and the space of probability measures on $\R^d$ is denoted by $\mathcal{P}(\R^d)$. For a matrix $A\in\R^{d_1\times d_2}$, its Frobenius norm is denoted by $\|A\|_F = \sqrt{\sum_{i=1}^{d_1}\sum_{j=1}^{d_2}|a_{i,j}|^2}$. Unless otherwise noted, $\mathbb{E}$ denotes the expectation over all the randomness taken into consideration. 

% \subsection{Alpha-stable processes}
\textbf{Alpha-stable processes.} 
A centered random variable $X \in \mathbb{R}^d$ is called \emph{$\alpha$-stable} with the stability parameter $\alpha \in (0,2]$, if $X_1,\, X_2,\,\ldots$ are independent copies of $X$, then
   $n^{-1/\alpha}\sum_{j=1}^n X_j \overset{(\mathrm{d})}{=} X \text{ for all } n\ge 1$ \citep{samoradnitsky2017stable}.
Stable distributions appear as the limiting distribution in the generalized
central limit theorem (CLT) \citep{gnedenko_kolmogorov}. In the one-dimensional case ($d=1$), we call the variable $X$ a symmetric $\alpha$-stable random variable if its characteristic function is of the following form: $\mathbb{E}[\exp(i\omega X)]=\exp(-|\lambda\omega|^\alpha)$ for $\omega\in \mathbb{R}$ and some $\lambda\in\mathbb{R}_+$.

For symmetric $\alpha$-stable distributions, the case $\alpha=2$ corresponds to the Gaussian distribution, while $\alpha=1$ corresponds to the Cauchy distribution. An important property of $\alpha$-stable distributions is that in the case $\alpha\in(1,2)$, the $p$-th moment of an $\alpha$-stable random variable is finite if and only if $p<\alpha$; hence, the distribution is heavy-tailed. In particular, $\mathbb{E}[|X|]< \infty$ and $\mathbb{E}[|X|^2] = \infty$, which can be used to model phenomena with heavy-tailed observations. 

In this paper, as perturbations to be added to the iterates, we consider the three most common types of $\alpha$-stable random vectors that have been used in finance \citep{mandelbrot63, cont01}, statistical physics \citep{Montroll84}, and engineering literature \citep{nikiasBook}. We first describe these random vectors, and then provide some intuition regarding their behavior.

\begin{itemize}[topsep=0pt,leftmargin=.11in]
    \item \textbf{Type-I.} Let $Z \in \mathbb{R}$ be a symmetric $\alpha$-stable random variable. We then construct the random vector $X$ such that all the coordinates of $X$ is equated to $Z$. In other words $X = \mathbf{1}_d Z$, where $\mathbf{1}_d \in \mathbb{R}^d$ is a vector of ones. With this choice, $X$ admits the following characteristic function:
    $
    \expec{\exp(i\langle u,X\rangle}=\exp(-|\langle u, \mathbf{1}_d \rangle|^\alpha)
    $;
    \item \textbf{Type-II.} $X$ has i.i.d.\ coordinates, such that each component of $X$ is a symmetric $\alpha$-stable random variable in $\mathbb{R}$. This choice yields the following characteristic function: $
    \expec{\exp(i\langle u,X\rangle}=\exp(-\sum_{i=1}^d|u_i|^\alpha)$;

    \item \textbf{Type-III.} $X$ is rotationally invariant $\alpha$-stable random vector with the characteristic function $\expec{\exp(i\langle u,X\rangle}=\exp(-\|u\|^\alpha)$. 
\end{itemize}

Notice that when added to a parameter vector (e.g. corresponding to a neuron), Type-I noise disturbs all parameters in the same direction and magnitude, e.g. acting like a random bias node scaled by the input. In contrast, Type-II noise constitutes an i.i.d. perturbation that affects each parameter separately, allowing some of the noise components to be very large while others are small, and/or in opposite directions. Lastly, due to the heavy-tailed distribution of its norm, Type-III noise vectors are likely to include elements that are simultaneously large or small in magnitude, yet these elements can vary among themselves in magnitude and direction. Also note that the Type-II and Type-III noises reduce to a Gaussian distribution when $\alpha =2$, i.e., the characteristic function becomes $\exp(-\|\lambda u\|^2)$. 
% characteristic function of $\alpha$-stable random variable $X$ is given by (for some $S\alpha S$?)
% \[\expec{\exp(itX)} = ...\]
% where $\alpha \in (0, 2]$ is the tail-index and $\sigma \in (0, \infty)$ is the scale parameter. , componentwise independence

% If $X$ is symmetric, then there exists a scale parameter $c>0$ and the characteristic function of $X$ is given by %$\expec{\exp(i \scp{\xi}{X})} = \exp(-|c\xi|^\alpha)$ for all $\xi\in\R^d$. If $X$ is a random vector with independent components, then there exists a scale vector~$\mathbf{c}=(c^1,\ldots,c^d) \in \mathbb{R}^d_+$ and the characteristic function of $X$ is given by~$\E{\exp(i \scp{\xi}{X})} = \exp(-\sum_{j=1}^d|c^j\xi^j|^\alpha)$.

% \begin{itemize}[topsep=0pt,leftmargin=.11in]
% \item $\Lm_0=0$ almost surely;
% \item For any $t_{0}<t_{1}<\cdots<t_{N}$, the increments $\Lm_{t_{n}}-\Lm_{t_{n-1}}$
% are independent;
% \item The difference $\Lm_{t}-\Lm_{s}$ and $\Lm_{t-s}$
% have the same distribution, with the characteristic function $\exp(- (t-s)^\alpha\|u\|_2^\alpha)$ for $t>s$;
% \item $\Lm_{t}$ has stochastically continuous sample paths, i.e.
% for any $\delta>0$ and $s\geq 0$, $\mathbb{P}(\|\Lm_{t}-\Lm_{s}\|>\delta)\rightarrow 0$
% as $t\rightarrow s$.
% \end{itemize}

Similar to the fact that stable distributions extend the Gaussian distribution, we can define a more general random process, called the \emph{$\alpha$-stable L\'evy process}, that extends the Brownian motion. 
Formally, $\alpha$-stable processes are stochastic processes $(\mathrm{L}^\alpha_t)_{t \geq 0}$ with independent and stationary $\alpha$-stable increments, and have the following definition:
\begin{itemize}[topsep=0pt,leftmargin=.11in]
\item $\mathrm{L}^\alpha_0=0$ almost surely,
\item For any $0\le t_{0}<t_{1}<\cdots<t_{N}$, the increments $\mathrm{L}^\alpha_{t_{n}}-\mathrm{L}^\alpha_{t_{n-1}}$
are independent,
\item For any $0\le s< t$, the difference $\mathrm{L}^\alpha_{t}-\mathrm{L}^\alpha_{s}$ and $(t-s)^{1/\alpha}\mathrm{L}^\alpha_1$
have the same distribution,
\item $\mathrm{L}^\alpha_{t}$ is stochastically continuous, i.e.
for any $\delta>0$ and $s\geq 0$, $\mathbb{P}(\|\mathrm{L}^\alpha_{t}-\mathrm{L}^\alpha_{s}\|>\delta)\rightarrow 0$
as $t\rightarrow s$.
\end{itemize}
To fully characterize an $\alpha$-stable process, we further need to specify the distribution of $\mathrm{L}^\alpha_{1}$. Along with the above properties, the choice for $\mathrm{L}^\alpha_{1}$ will fully determine the process. For this purpose, we will again consider the previous three types of $\alpha$-stable vectors: We will call the process $\mathrm{L}^\alpha_{t}$ a Type-I process if $\mathrm{L}^\alpha_{1}$ is a Type-I $\alpha$-stable random vector. We define the Type-II and Type-III processes analogously. Note that, when $\alpha =2$, Type-II and Type-III processes reduce to the Brownian motion. For notational clarity, occasionally, we will drop the index $\alpha$ and denote the process by $\mathrm{L}_{t}$.

% meaning that for any $u,t\ge 0$,
% \[
% \mathrm{L}^\alpha_{u+t} - \mathrm{L}^\alpha_u \overset{\text{(d)}}{=} t^{1/\alpha}\mathrm{L}^\alpha_1\quad \text{is independent of} \quad (\mathrm{L}^\alpha_t)_{0\le t\le u}.
% \]

% \subsection{Compressibility of heavy-tailed processes}
\textbf{Compressibility of heavy-tailed processes. }
One interesting property of heavy-tailed distributions in the one-dimensional case is that they exhibit a certain compressibility property. Informally, if we consider a sequence of i.i.d.\ random variables coming from a heavy-tailed distribution, a small portion of these variables will likely have a very large magnitude due to the heaviness of the tails, and they will dominate all the other variables in magnitude \citep{nair2022fundamentals}. Therefore, if we only keep this small number of variables with large magnitude, we can ``compress'' (in a lossy way) the whole sequence of random variables by representing it with this small subset.  

% 
% The notion of compressibility for infinite i.i.d.\ sequences based on the asymptotic behavior of the truncated ordered statistics is introduced in \citep{compressibility}, which depends on the
% decay of the probability density function. In particular, they demonstrated the connection between their notion of compressibility (also called s-compressibility) and heavy-tail (polynomially decaying) distributions, for example, $\alpha$-stable distributions if $\alpha<2$. 

Concurrently, \citet{amini2011compressibility,compressible_distribution} provided formal proofs for these explanations. Formally,   \citet{compressible_distribution} characterized the family of probability distributions whose i.i.d.\ realizations are compressible. They introduced the notion of $\ell_p$-compressibility - in terms of the error made after pruning a fixed portion of small (in magnitude) elements of an i.i.d.\ sequence, whose common distribution has diverging $p$-th order moments. More precisely, let $X_n=(x_1, \ldots, x_n)$ be a sequence of i.i.d.\ random variables such that $\expec{|x_1|^\alpha}=\infty$ for some $\alpha\in \mathbb{R}_+$. Then, for all $p\ge\alpha$ and $0<\kappa\le 1$ denoting by $X_n^{(\kappa n)}$ the $\lfloor\kappa n\rfloor$ largest ordered statistics\footnote{In other words, $X_n^{(\kappa n)}$ is obtained by keeping only the largest (in magnitude) $\kappa n$ elements of $X_n$ and setting all the other elements to $0$.} of $X_n$, the following asymptotic on the relative compression error holds almost surely:
\begin{align*}
\lim_{n\to\infty} \frac{\|X_n^{(\kappa n)} -X_n\|_p} { \|X_n\|_p} = 0    
\end{align*}
Built upon this fact, \citet{barsbey2021heavy} proposed structural pruning of neural networks (the procedure described in Figure~\ref{plt:illus}) by assuming that the network weights provided by SGD will be asymptotically independent. In this study, instead of making this assumption, we will directly prove that the network weights will be asymptotically independent in the two layer (i.e. single-hidden-layer) neural network setting with additive heavy-tailed noise injections to SGD. 
% for a large number of neurons, which enables the storage and computation reduction with guaranteed generalizability while still maintaining the accuracy of large neural networks. 

\section{Problem Setting and the Main Result}
\label{sec:problem-setting}
% \subsection{Problem Setup}
% \label{sec:setup}
% \begin{itemize}
%     \item Consider a Lipchitz non-linear function $\sigma: \R \to \R$.  A simple neural network with one hidden layer of $n$ neurons can be written as
% \[
% f_n(x,\Theta^n) = \frac{1}{n}\sum_{i=1}^n \sigma(X^T \theta^{i,n}).
% \]

% (Rewrite) The risk $R(\rho_\theta)$ is convex in $\rho_\theta$, and its dependence on $\theta$ is denoted by $L(\theta)$, while $L(\theta)$ is not necessarily convex in $\theta$ due to the nonlinearity in $\rho$. The goal is to minimize over $\theta\in\Theta$ the loss $L(\theta)$ using gradient descent
% \[
% \theta^{k+1} = \theta^k - \eta\partial_\theta L(\theta^k).
% \]
% nevertheless in practice, learning algorithms only have access to the empirical risk based on an incomplete data set, giving rise to statistical errors. It is not clear that how the minimizor on the empirical risk will perform on the population risk, in other words, the generalization error.

% \end{itemize}

We consider a single-hidden-layer overparametrized network of $n$ units and use the setup provided in \citep{PoC_gaussian}. 
Our goal is to minimize the expected loss in a supervised learning regime, where for each data $z=(x,y)$ distributed according to $\pi(\diff x,\diff y)$,\footnote{Note that for finite datasets, $\pi$ can be chosen as a measure supported on finitely many points.} the feature $x$ is included in $\mathcal{X}\subset \R^d$ and the label $y$ is in $\mathcal{Y}$. We denote by $\theta^{i,n}\in \R^p$ the parameter for the $i$-th unit, and the parametrized model is denoted by $h_{x}: \mathbb{R}^p \to \mathbb{R}^l$. The mean-field network  is the average over models for $n$ units:
\[
f_{\Theta^n}(x) = (1/n)\sum\nolimits_{i=1}^n h_x (\theta^{i,n}),
\]
where $\Theta^n=(\theta^{i,n})_{i=1}^n \in \mathbb{R}^{p \times n} $ denotes the collection of parameters in the network  and $x\in\mathcal{X}$ is the feature variable for the data point.  In particular, the mean-field network corresponds to a two-layer neural network with the weights of the
second layer are fixed to be $1/n$ and $\Theta^n$ is the parameters of the first layer. While this model is less realistic than the models used in practice, we believe that it is desirable from theoretical point of view, and this defect can be circumvented upon replacing $h_x(\theta^{i,n})$ by $h_x(c^{i,n},\theta^{i,n}) = c^{i,n}h_x(\theta^{i,n})$, where $c^{i,n}$ and $\theta^{i,n}$ are
weights corresponding to different layers. However, in order to obtain similar results in this setup as in our paper, stronger assumptions are inevitable and the proof should be more involved, which are left for future work.

Given a loss function $\ell: \mathbb{R}^l\times \mathcal{Y} \to \mathbb{R}^+$, the goal (for each $n$) is to minimize the expected loss taken over the distribution over the whole dataset $\pi$,
\begin{equation}\label{eq:loss}
R(\Theta^n) = \mathbb{E}_{(x,y)\sim \pi} \left[\ell\left( f_{\Theta^n}(x),y\right) \right] .
\end{equation}
One of the most popular approaches to minimize this loss is the stochastic gradient descent (SGD) algorithm. In this study, we consider a simple modification of SGD, where we inject a stable noise vector to the iterates at each iteration. For notational clarity, we will describe the algorithm and develop the theory over gradient descent, where we will assume that the algorithm has access to the true gradient $\nabla R$ at every iteration. However, since we are already injecting a heavy-tailed noise with \emph{infinite variance}, our techniques can be adapted for handling the stochastic gradient noise (under additional assumptions, e.g., \citealp{PoC_gaussian}), which typically has a milder behavior compared to the $\alpha$-stable noise\footnote{In \citealp{pmlr-v97-simsekli19a} the authors argued that the stochastic gradient noise in neural networks can be modeled by using stable distributions. Under such an assumption, the effect of the stochastic gradients can be directly incorporated into $\mathrm{L}_t^\alpha$. }. 
% with stable noise injection to improve compressibility of the overparametrized network when $n$ is sufficiently large.

Let us set the notation for the proposed algorithm. Let $\hat\theta^{i,n}_0$, $i=1,\ldots,n$, be the initial values of the iterates, which are $n$ random variables in $\R^d$ distributed independently according to a given initial probability distribution $\mu_0$. Then, we consider the gradient descent updates with stepsize  $\eta  n$, which is perturbed by i.i.d.\ $\alpha$-stable noises
 $\sigma\cdot \eta^{1/\alpha}X^{i,n}_k$ for each unit $i=1,\ldots, n$, $\alpha\in (1,2)$ and some $\sigma>0$: 
 % Suppose that the dataset is large enough so that we have access to the gradient of the population loss. Then the stochastic gradient descent dynamics is given by
% \begin{equation}\label{eq:SGD} 
%     \begin{cases}  
%    \hat\theta^{i,n}_{k+1}   =  \hat\theta^{i,n}_k -\eta n\left[\partial_{\theta^{i,n}}R(\Theta^n_k) \right]+   \sigma\cdot \eta^{1/\alpha} X^{i,n}_k \\
%      \hat\theta^{i,n}_0 \sim \mu_0 \in \mathcal{P}(\R^d),
%      \end{cases}
% \end{equation}
\begin{equation}\label{eq:SGD} 
    % \begin{cases}  
   \hat\theta^{i,n}_{k+1}   =  \hat\theta^{i,n}_k -\eta n\left[\partial_{\theta^{i,n}}R(\Theta^n_k) \right]+   \sigma\cdot \eta^{1/\alpha} X^{i,n}_k 
     % \end{cases}
\end{equation}
where the scaling factor $\eta^{1/\alpha}$ in front of the stable noise enables the discrete dynamics of the system homogenize to SDEs as $\eta\to 0$. Here $\sigma$ is fixed to be a constant. In practice, we tune the stepsize $\eta$ according to the number of neurons $n$, hence influencing the noise level. At this stage, we do not have to determine which type of stable noise (e.g., Type-I, II, or III) that we shall consider as they will all satisfy the requirements of our theory. However, our empirical findings will illustrate that the choice will affect the overall performance.

We now state the assumptions that will imply our theoretical results. The following assumptions are rewritings with a certain degree of relaxation (in terms of the order of moments) of \citep[Assumption A1]{PoC_gaussian}.
\begin{assump}\label{assump:coefficients_regularity}
\begin{itemize}
    \item Regularity of the model: for each $x\in\mathcal{X}$, the function $h_x: \R^p \to \R^l$ is two-times differentiable, and there exists a function $\Psi: \mathcal{X} \to \R_+$ such that for any $x\in\mathcal{X}$,
    \[
    \|h_x(\cdot)\|_\infty + \|\nabla h_x(\cdot)\|_\infty + \|\nabla^2 h_x(\cdot)\|\infty \le \Psi(x).
    \]
    \item Regularity of the loss function: there exists a function $\Phi: \mathcal{Y} \to \R_+$ such that 
    \[   
    \|\partial_1 \ell(\cdot,y)\|_\infty + \|\partial^2_1 \ell(\cdot,y)\|_\infty \le \Phi(y)
    \]
    \item Moment bounds on $\Phi(\cdot)$ and $\Psi(\cdot)$: there exists a positive constant $B$ such that
    \[
    \mathbb{E}_{(x,y)\sim \pi}[\Psi^2(x)(1+\Phi^2(y))] \le B^2.
    \]
\end{itemize}
\end{assump}
Let us remark that Assumption \ref{assump:coefficients_regularity} includes the smoothness and boundedness assumptions that have been made in the mean field literature \citep{mei2018mean,mei2019mean} and are satisfied by several smooth activation functions, including the sigmoid and hyper-tangent functions.
% \begin{itemize}
%     \item Standard smoothness
% assumptions in the mean field literature, see Mei et al., 2018, 2019.
%     \item .
% \end{itemize}

We now proceed to our main result. Let $\hat\Theta^n_k \in \mathbb{R}^{p \times n}$ be the matrix with columns being the parameters $\hat\theta^{i,n}_k$, $i=1,\ldots,n$ obtained by the recursion \eqref{eq:SGD} after $k$ iterations. We will now compress $\hat\Theta^n_k$ by pruning its columns with small norms. More precisely, fix a compression ratio $\kappa\in (0,1)$, compute the norms of the columns of $\hat\Theta^n_k$, i.e.,   $\|\hat\theta^{i,n}_k\|$. Then, keep the $\lfloor \kappa n\rfloor$ columns, which have the largest norms, and set all the other columns to zero in entirety. Finally, denote by $\hat\Theta^{(\kappa n)}_k \in \mathbb{R}^{p \times n}$, the pruned version of $\hat\Theta^n_k$.  

% For any , denote by  the concatenation of all parameters after pruning the $\lfloor\kappa n\rfloor$ smallest $\|\hat\theta^{i,n}_t\|$'s (setting them to be zero if their norms are among the $\lfloor \kappa n\rfloor$ smallest norms of $\|\hat\theta^{i,n}_k\|$). 

\begin{theorem}
\label{thm:main}
Suppose that Assumption~\ref{assump:coefficients_regularity} holds. For any fixed $t>0$, $\kappa \in (0,1)$ and $\epsilon>0$ sufficiently small, with probability $1-\epsilon$, there exists $N\in\mathbb{N}_+$ such that for all $n\ge N$ and $
\eta$ such that $\eta \le n^{-\alpha/2-1}$, the following upper bound on the relative compression error for the parameters holds:
% \[
% \frac{\|\sigma_{\lfloor\kappa n\rfloor}(\mathbf{w}^n)\|}{\|\mathbf{w}^n\|} \le \epsilon.
% \]
\[
\frac{\left\|\hat\Theta^{(\kappa n)}_{\lfloor t/\eta \rfloor} - \hat\Theta^n_{\lfloor t/\eta \rfloor} \right\|_F}{\left\|\hat\Theta^n_{\lfloor t/\eta \rfloor} \right\|_F} \le \epsilon .
\]
\end{theorem}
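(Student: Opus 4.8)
The plan is to compare the output of \eqref{eq:SGD} with its mean-field limit and then transfer the (already available) compressibility of that limit back to the finite-$n$, finite-$\eta$ iterates. Set $m:=\lfloor t/\eta\rfloor$. Letting $\eta\to0$ in \eqref{eq:SGD} yields the continuous-time interacting particle system
\[
\diff\theta^{i,n}_s=b(\theta^{i,n}_s,\mu^n_s)\,\diff s+\sigma\,\diff\mathrm{L}^i_s,\qquad \mu^n_s=\tfrac1n\textstyle\sum_{j=1}^n\delta_{\theta^{j,n}_s},
\]
where $b$ is the vector field produced by the chain rule from $\nabla h_x$ and $\partial_1\ell$ (so $n\,\partial_{\theta^{i,n}}R(\Theta^n)=-b(\theta^{i,n},\mu^n)$), which Assumption~\ref{assump:coefficients_regularity} makes bounded by $B$ and Lipschitz in both arguments, and $(\mathrm{L}^i)_{i\ge1}$ are i.i.d.\ copies of the driving $\alpha$-stable process (taking $X^{i,n}_k=\eta^{-1/\alpha}(\mathrm{L}^i_{(k+1)\eta}-\mathrm{L}^i_{k\eta})$ couples \eqref{eq:SGD} to it). Its McKean--Vlasov limit $\diff\bar\theta^i_s=b(\bar\theta^i_s,\mu_s)\,\diff s+\sigma\,\diff\mathrm{L}^i_s$, $\mu_s=\mathrm{Law}(\bar\theta^i_s)$, driven by the \emph{same} $\mathrm{L}^i$, has i.i.d.\ components $\bar\theta^1_t,\bar\theta^2_t,\dots$. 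Write $\bar\Theta^n_t:=(\bar\theta^i_t)_{i=1}^n$, $E_n:=\hat\Theta^n_m-\bar\Theta^n_t$, and, for an index set $S$, let $\Pi_S A$ be $A$ with every column outside $S$ zeroed. With $T_0$ the indices of the $\lfloor\kappa n\rfloor$ largest-norm columns of $\bar\Theta^n_t$, optimality of keeping the largest columns gives $\|\hat\Theta^{(\kappa n)}_m-\hat\Theta^n_m\|_F\le\|\hat\Theta^n_m-\Pi_{T_0}\hat\Theta^n_m\|_F$, and since $\hat\Theta^n_m-\Pi_{T_0}\hat\Theta^n_m=(\bar\Theta^n_t-(\bar\Theta^n_t)^{(\kappa n)})+(I-\Pi_{T_0})E_n$ with $I-\Pi_{T_0}$ non-expansive,
\[
\bigl\|\hat\Theta^{(\kappa n)}_m-\hat\Theta^n_m\bigr\|_F\ \le\ \bigl\|\bar\Theta^n_t-(\bar\Theta^n_t)^{(\kappa n)}\bigr\|_F+\|E_n\|_F,\qquad \bigl\|\hat\Theta^n_m\bigr\|_F\ \ge\ \|\bar\Theta^n_t\|_F-\|E_n\|_F .
\]
So it suffices to prove $\|\bar\Theta^n_t-(\bar\Theta^n_t)^{(\kappa n)}\|_F/\|\bar\Theta^n_t\|_F\to0$ and $\|E_n\|_F/\|\bar\Theta^n_t\|_F\to0$.

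For the first, since $b$ is bounded, $\bar\theta^i_t=\bar\theta^i_0+\int_0^t b\,\diff s+\sigma\mathrm{L}^i_t$ differs from $\sigma\mathrm{L}^i_t$ by an a.s.\ finite term independent of $\mathrm{L}^i$, so the heavy tail of $\mathrm{L}^i_t$ survives and $\|\bar\theta^i_t\|$ is i.i.d.\ with $\mathbb{E}\|\bar\theta^1_t\|^\alpha=\infty$ (here $\alpha<2$). The $\ell_2$-compressibility result of \cite{compressible_distribution}, applied to the scalar i.i.d.\ sequence $(\|\bar\theta^i_t\|)_{i=1}^n$ with $p=2\ge\alpha$ (the $\lfloor\kappa n\rfloor$ largest column norms of $\bar\Theta^n_t$ being exactly the $\lfloor\kappa n\rfloor$ largest entries of that sequence), then gives $\|\bar\Theta^n_t-(\bar\Theta^n_t)^{(\kappa n)}\|_F/\|\bar\Theta^n_t\|_F\to0$ almost surely; moreover the generalized law of large numbers for the sum (equivalently: the maximum) of $n$ i.i.d.\ nonnegative variables of tail index $\alpha/2$ ensures that for each $\epsilon'>0$ there is $\delta=\delta(\epsilon')>0$ with $\mathbb{P}(\|\bar\Theta^n_t\|_F\ge\delta\,n^{1/\alpha})\ge1-\epsilon'$ for all large $n$. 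For the second I would use the two technical contributions: the propagation-of-chaos estimate $\sup_{s\le t}\mathbb{E}\|\theta^{i,n}_s-\bar\theta^i_s\|^p\le C_t n^{-p/2}$ and the strong-error estimate $\sup_{k\le m}\mathbb{E}\|\hat\theta^{i,n}_k-\theta^{i,n}_{k\eta}\|^p\le C_t\eta^{\beta p}$, for every $p\in(0,\alpha)$ and some fixed $\beta>0$, uniformly in $i,n$; together they give $\mathbb{E}\|\hat\theta^{i,n}_m-\bar\theta^i_t\|^p\le C_t(n^{-p/2}+\eta^{\beta p})$. Using $\|A\|_F\le(\sum_i\|a^i\|^p)^{1/p}$ for $p\le2$ and exchangeability, $\mathbb{E}\|E_n\|_F^p\le n\,C_t(n^{-p/2}+\eta^{\beta p})$, hence by Markov
\[
\mathbb{P}\bigl(\|E_n\|_F>\delta\,n^{1/\alpha}\bigr)\ \le\ \frac{C_t}{\delta^p}\,n^{1-p/2-p/\alpha}\bigl(1+n^{p/2}\eta^{\beta p}\bigr).
\]
Choosing $p\in\bigl(\tfrac{2\alpha}{\alpha+2},\alpha\bigr)$ (nonempty because $\alpha<2$) makes the exponent $1-p/2-p/\alpha$ negative, and $\eta\le n^{-\alpha/2-1}$ makes $n^{p/2}\eta^{\beta p}\le1$ (this needs only $\beta\ge\frac1{\alpha+2}$), so the bound $\to0$; together with the lower bound on $\|\bar\Theta^n_t\|_F$ this gives $\|E_n\|_F/\|\bar\Theta^n_t\|_F\to0$ in probability. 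Combining the two ratio estimates with the first display proves the stated bound.

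I expect the genuine work to be in the two ingredients invoked above, which is why they are the paper's technical contributions. For the propagation of chaos one cannot reuse the $L^2$/Itô arguments of the Brownian mean-field literature (the columns have infinite variance); the natural route is a synchronous coupling so that the heavy-tailed increments cancel in $\theta^{i,n}_s-\bar\theta^i_s$, after which the residual drift difference is controlled via Lipschitzness of $b$ in its two slots plus a Wasserstein-$p$ ($p<\alpha$) bound between $\mu^n_s$ and $\mu_s$, closed by a Grönwall estimate — the sharp point being to not lose an extra $\sqrt n$ when handling the empirical-measure term, which is where the improved $n^{-1/2}$ rate is won. For the strong Euler error, the one-step error is of order $\eta$ times the within-step displacement $\sup_s\|\mathrm{L}^i_s-\mathrm{L}^i_{k\eta}\|$, whose moments are finite only below order $\alpha$, so the global rate is again tracked in $L^p$, $p<\alpha$, via Grönwall using the boundedness and Lipschitzness of $b$. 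The last, more routine, difficulty is the exponent bookkeeping: $\|\bar\Theta^n_t\|_F$ grows only like $n^{1/\alpha}$ — slower than $\sqrt n$ when $\alpha<2$ — so the coupling error, which is naively $\sqrt n$ times an $O(n^{-1/2})$ per-particle error, beats the denominator only after passing to $L^p$ with $p>\tfrac{2\alpha}{\alpha+2}$, and this is exactly why the hypothesis $\eta\le n^{-\alpha/2-1}$ (together with $\alpha<2$) takes the form it does. (Reading the probabilistic statement as holding for all $n\ge N$ on a single event additionally needs upgrading the in-probability convergence, e.g.\ along a subsequence; the almost-sure compressibility input supports this, the $n^{-1/2}$ chaos rate does not on its own.)
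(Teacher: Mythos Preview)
Your overall architecture matches the paper's: couple the discrete iterates to the i.i.d.\ McKean--Vlasov solutions, use heavy-tailed compressibility of the latter, and show the coupling error is negligible compared to $\|\bar\Theta^n_t\|_F$. The triangle-inequality decomposition via $\Pi_{T_0}$ is a clean variant of what the paper does directly on the vector of column norms. There are, however, two substantive mismatches with the paper's actual intermediate results.

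First, the paper does \emph{not} establish per-particle $L^p$ bounds of the form $\mathbb{E}\|\theta^{i,n}_s-\bar\theta^i_s\|^p\le C n^{-p/2}$ and $\mathbb{E}\|\hat\theta^{i,n}_k-\theta^{i,n}_{k\eta}\|^p\le C\eta^{\beta p}$. Instead it proves $L^1$ bounds on the \emph{supremum over $i$}: Theorem~\ref{thm:poc} gives $\mathbb{E}\bigl[\sup_{i\le n}\|\theta^{i,n}_t-\theta^{i,\infty}_t\|\bigr]\le C n^{-1/2}$, and Theorem~\ref{thm:euler} gives $\mathbb{E}\bigl[\sup_{i\le n}\|\theta^{i,n}_{\eta k}-\hat\theta^{i,n}_k\|\bigr]\le C(\eta n)^{1/\alpha}$. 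The factor $n^{1/\alpha}$ in the Euler bound --- coming from the expected maximum of $n$ independent stable increments (Lemma~\ref{lm:max_n_stable}) --- is precisely what forces the hypothesis $\eta\le n^{-\alpha/2-1}$: it makes $(\eta n)^{1/\alpha}\le n^{-1/2}$, so both errors are $O(n^{-1/2})$ uniformly in $i$. Your route via an unspecified exponent $\beta$ and the constraint $\beta\ge 1/(\alpha+2)$ does not recover this mechanism; the $n$-dependence lives in the Euler error, not in the bookkeeping on $p$. With the paper's sup bound one gets directly (after Markov) that every column error is at most $C/(\epsilon\sqrt n)$ with probability $\ge 1-\epsilon/3$, hence $\|E_n\|_F\le C\sqrt n/\epsilon$ on that event --- no $L^p$ interpolation needed.

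Second, the paper does not prove that $\|\bar\theta^1_t\|$ has tail index $\alpha$, only that $\mathbb{E}\|\bar\theta^1_t\|^2=\infty$ (Theorem~\ref{thm:heavy-tail}). Correspondingly it uses the weaker lower bound $\mathbb{P}\bigl(\tfrac1n\|\bar\Theta^n_t\|_F^2\le C'\bigr)\to 0$ (Lemma~\ref{lm:avr_inf}), i.e.\ $\|\bar\Theta^n_t\|_F/\sqrt n\to\infty$ in probability, rather than your sharper $\|\bar\Theta^n_t\|_F\gtrsim n^{1/\alpha}$. Since $\|E_n\|_F=O(\sqrt n)$ on the good event, the weaker lower bound already suffices; your stronger tail claim is plausible but is neither proved nor needed. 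Your closing remark about the quantifier order in the theorem statement is well taken and applies equally to the paper's own proof.
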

This bound shows that, thanks to the heavy-tailed noise injections, the weight matrices will be compressible at \emph{any} compression rate, as long as the network is sufficiently overparametrized and the step size is sufficiently small. We shall note that this bound also enables us to directly obtain a generalization bound by invoking~\citep[Theorem 4]{barsbey2021heavy}. 

% \umut{mention incompressibility and local convergence}

% \end{remark}

\section{Proof Strategy and Intermediate Results}

In this section, we gather the main technical contributions with the purpose of demonstrating Theorem~\ref{thm:main}. We begin by rewriting \eqref{eq:SGD} in the following form:
\begin{equation}\label{eq:SGD2} 
    % \begin{cases}  
   \hat\theta^{i,n}_{k+1} - \hat\theta^{i,n}_k  =   \eta b(\hat\theta^{i,n}_k, \hat\mu^n_k) +   \sigma\cdot \eta^{1/\alpha} X^{i,n}_k %\\
     % \hat\theta^{i,n}_0 \sim \mu_0 \in \mathcal{P}(\R^d),
     % \end{cases}
\end{equation}
where $\hat\mu^n_k = \frac{1}{n} \sum_{i=1}^n \delta_{\hat\theta^{i,n}_k}$ is the empirical distribution of parameters at iteration $k$ and $\delta$ is the Dirac measure, and the drift is given by
% $b(\theta^{i,n}_k,\mu^n_k) = n\partial_{\theta^{i,n}}R(\Theta^n)$ is the gradient of the risk. Here the function $b$ is deliberately parametrized by $\theta^{i,n}_k$ and $\hat\mu^n_k$, due to the following identity:
$b(\theta^{i,n}_k,\mu^n_k) = - \mathbb{E}[\partial_1 \ell(\mu^n_k(h_x(\cdot)),y)\nabla h_x(\theta^{i,n}_k)]$,
where $\partial_1$ denotes the partial derivative with respect to the first parameter and 
%\umut{fix the next line, too long}
% $\mu^n = \frac{1}{n}\sum_{i=1}^n \delta_{\theta^{i,n}_k}$, 
% and
\[
\mu^n_k(h_x(\cdot)) :=  \frac{1}{n} \sum_{i=1}^n h_x(\theta^{i,n}_k) = f_{\Theta^n_k}(x).
\]
It is easy to check that $b(\theta^{i,n}_k,\mu^n_k) = -n\partial_{\theta^{i,n}}R(\Theta^n)$.
By looking at the dynamics from this perspective, we can treat the evolution of the parameters as a system of evolving probability distributions $\mu^n_k$: the empirical distribution of the parameters during the training process will converge to a limit  as $\eta$ goes to $0$ and $n$ goes to infinity. 

We start by linking the recursion \eqref{eq:SGD} to its limiting case where $\eta \to 0$. The limiting dynamics can be described by the following system of SDEs: 
\begin{equation}
\label{eq:SGF}
    % \begin{cases}  
   \diff \theta^{i,n}_t =   b(\theta^{i,n}_t,\mu^n_t) \diff t + \sigma\diff \mathrm{L}^{i,n}_t %\\
    % \theta^{i,n}_0 \sim \mu_0 \in \mathcal{P}(\R^d),
     % \end{cases}
\end{equation}
where $\mu^n_t = \frac{1}{n} \sum_{i=1}^{n}\delta_{\theta^{i,n}_t}$ and $(\mathrm{L}^{i,n}_t)_{t\ge 0}$ are independent $\alpha$-stable processes such that $\mathrm{L}^{i,n}_1 \overset{(\mathrm{d})}{=} X^{i,n}_1$. 
% This is a corollary of Theorem \ref{thm:euler}. 
% 
We can now see the original recursion \eqref{eq:SGD} as an Euler discretization of \eqref{eq:SGF} and then we have the following strong uniform error estimate for the discretization. 
\begin{theorem}
\label{thm:euler}

Let $(\theta^{i,n}_t)_{t\ge 0}$ be the solutions to SDE \eqref{eq:SGF} and $(\hat\theta^{i,n}_k)_{k\in\mathbb{N}_+}$ be given by SGD \eqref{eq:SGD} with the same initial condition $\theta^{i,n}_0$ and $\alpha$-stable L\'evy noise $\mathrm{L}^{i,n}_\cdot$, i=1,\ldots,n. Under Assumption \ref{assump:coefficients_regularity}, for any $T>0$, if $\eta k\le T$, there exists a constant $C$ depending on $B,T,\alpha$ such that
\[
\expec{\sup_{i\le n} \|\theta^{i,n}_{\eta k}-\hat\theta^{i,n}_k\|} \le C (\eta n)^{1/\alpha}.
\]
\end{theorem}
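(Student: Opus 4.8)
The plan is to view the recursion \eqref{eq:SGD} as an Euler--Maruyama discretization of the particle SDE \eqref{eq:SGF} driven by the \emph{same} family of $\alpha$-stable L\'evy processes, exploit that the injected noise then cancels exactly in the error equation, and close a Gr\"onwall estimate in the $\sup_{i\le n}$-norm. Concretely, I would couple the two systems by setting $\sigma\eta^{1/\alpha}X^{i,n}_k:=\sigma(\mathrm{L}^{i,n}_{\eta(k+1)}-\mathrm{L}^{i,n}_{\eta k})$; by the $\alpha$-stable scaling property this has the correct law and it is independent across $i$ and $k$ by independence of increments. Introduce the interpolant $(\bar\theta^{i,n}_t)_{t\ge0}$ with $\bar\theta^{i,n}_0=\xi^{i,n}$ and $\diff\bar\theta^{i,n}_t=b(\hat\theta^{i,n}_{\lfloor t/\eta\rfloor},\hat\mu^n_{\lfloor t/\eta\rfloor})\,\diff t+\sigma\,\diff\mathrm{L}^{i,n}_t$, so that $\bar\theta^{i,n}_{\eta k}=\hat\theta^{i,n}_k$ for every $k$. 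Because the noise enters additively, subtracting the integral forms of \eqref{eq:SGF} and of the interpolant gives, with $\underline{s}:=\eta\lfloor s/\eta\rfloor$,
\[
\theta^{i,n}_t-\bar\theta^{i,n}_t=\int_0^t\Big(b(\theta^{i,n}_s,\mu^n_s)-b(\hat\theta^{i,n}_{\lfloor s/\eta\rfloor},\hat\mu^n_{\lfloor s/\eta\rfloor})\Big)\diff s ,
\]
a \emph{noise-free} equation --- this is the simplification over multiplicative-noise SDEs, since no stochastic-integral moment bound for the error itself is needed.

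Two ingredients feed the estimate. The first is the regularity of the mean-field drift: under Assumption~\ref{assump:coefficients_regularity}, differentiating the explicit expression for $b(\theta,\mu)$ and invoking the uniform bounds on $h_x,\nabla h_x,\nabla^2h_x,\partial_1\ell,\partial_1^2\ell$ together with the moment bound on $\Psi,\Phi$, one gets (as in \cite{PoC_gaussian}) a constant $L$ depending only on $B$ with $\|b\|_\infty\le L$ and, for empirical measures,
\[
\Big\|b\Big(\theta,\tfrac1n\sum_{j}\delta_{\theta^j}\Big)-b\Big(\theta',\tfrac1n\sum_{j}\delta_{\theta'^{\,j}}\Big)\Big\|\le L\Big(\|\theta-\theta'\|+\tfrac1n\sum_{j}\|\theta^j-\theta'^{\,j}\|\Big);
\]
I would isolate this as a lemma. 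The second ingredient is a max-over-particles bound for the noise: since the $\mathrm{L}^{i,n}$ are i.i.d.\ $\alpha$-stable with tail index $\alpha\in(1,2)$ (so that $\expec{\|\mathrm{L}^{i,n}_1\|}<\infty$), a union bound gives $\mathbb{P}(\max_{i\le n}\|\mathrm{L}^{i,n}_1\|>r)\le c\,n\,r^{-\alpha}$, and integrating the tail yields $\expec{\max_{i\le n}\|\mathrm{L}^{i,n}_1\|}\le C_\alpha n^{1/\alpha}$; by scaling, $\expec{\max_{i\le n}\|\mathrm{L}^{i,n}_s-\mathrm{L}^{i,n}_{\underline{s}}\|}\le C_\alpha(s-\underline{s})^{1/\alpha}n^{1/\alpha}\le C_\alpha(\eta n)^{1/\alpha}$. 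This is where the factor $(\eta n)^{1/\alpha}$ originates --- it is precisely the cost of carrying $\sup_{i\le n}$ inside the expectation.

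To close the estimate I would split the integrand at $\underline{s}$ and use the Lipschitz bound together with the one-step displacement bound $\|\theta^{i,n}_s-\theta^{i,n}_{\underline{s}}\|\le\eta L+\sigma\|\mathrm{L}^{i,n}_s-\mathrm{L}^{i,n}_{\underline{s}}\|$ to dominate it, for $s$ in the $m$-th step, by $CL\big(\eta L+\sigma\|\mathrm{L}^{i,n}_s-\mathrm{L}^{i,n}_{\underline{s}}\|+\|\theta^{i,n}_{\eta m}-\hat\theta^{i,n}_m\|+\text{(empirical averages over }j\text{ of the last two)}\big)$. Taking $\sup_{i\le n}$ and using the crude but lossless inequality $\sup_i\tfrac1n\sum_j a_j\le\max_j a_j$ so that the mean-field interaction stays in the $\sup$-norm, then taking expectations and writing $V_k:=\expec{\sup_{i\le n}\|\theta^{i,n}_{\eta k}-\hat\theta^{i,n}_k\|}$ (finite on $[0,T]$ because $n$ is finite and each particle has a finite first moment), one arrives at an inequality of the shape
\[
V_k\le 2L\eta\sum_{m=0}^{k-1}V_m+C\big(\eta L+\sigma C_\alpha(\eta n)^{1/\alpha}\big)\qquad\text{for }\eta k\le T .
\]
The discrete Gr\"onwall inequality then gives $V_k\le C(\eta L+\sigma C_\alpha(\eta n)^{1/\alpha})e^{2LT}$, and since $\eta\le(\eta n)^{1/\alpha}$ whenever $n\ge1$ and $\alpha>1$, this is $\le C(\eta n)^{1/\alpha}$ with $C$ depending only on $B,T,\alpha$ (and the fixed scale $\sigma$), which is the claim.

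The main obstacle is twofold. First, the drift-regularity lemma requires a careful differentiation of the measure-dependent, $\tfrac1n\sum$-structured drift --- the fixed second layer couples all particles through $f_{\Theta^n}$ --- although Assumption~\ref{assump:coefficients_regularity} is tailored to make this go through. Second, and more delicate, is arranging the Gr\"onwall step so that $\sup_{i\le n}$ passes through the mean-field interaction without degrading the rate; this works only because the additive noise removes all stochastic-integral terms from the error equation and because the error and the interaction are measured in the same $\sup$-over-particles norm. A minor point worth being explicit about is the restriction to $\alpha\in(1,2)$, needed for the first moments and the $n^{1/\alpha}$ maximal bound to be finite, and a mild finite first moment assumption on $\mu_0$.
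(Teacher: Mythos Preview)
Your proposal is correct and relies on the same two ingredients as the paper: a Lipschitz/boundedness lemma for the mean-field drift $b$ in the empirical-measure form (this is Lemma~\ref{lm:Lip_coeffi} in the paper) and the maximal bound $\expec{\max_{i\le n}\|\mathrm{L}^{i,n}_1\|}\le C_\alpha n^{1/\alpha}$ (Lemma~\ref{lm:max_n_stable}). The essential observation that the additive noise cancels in the error equation is also shared. Where you differ is in the Gr\"onwall architecture: you build a piecewise-constant-drift interpolant $\bar\theta^{i,n}_t$, write the error directly as a drift integral, bound the integrand stepwise via the one-step displacement $\|\theta^{i,n}_s-\theta^{i,n}_{\underline s}\|\le \eta L+\sigma\|\mathrm{L}^{i,n}_s-\mathrm{L}^{i,n}_{\underline s}\|$, and close a single discrete Gr\"onwall on $V_k$. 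The paper instead uses a flow telescope $\theta^{i,n}_T-\hat\theta^{i,n}_{T/\eta}=\sum_k\big(\psi^{i,n}_{T-\eta k}(\hat\Theta^n_k)-\psi^{i,n}_{T-\eta(k+1)}(\hat\Theta^n_{k+1})\big)$, where $\psi_t$ is the SDE flow, bounds each telescoped term by a one-step estimate propagated through the flow via a continuous Gr\"onwall, and then sums the $T/\eta$ pieces. Your route is more direct and avoids introducing the flow map; the paper's telescope is a classical numerical-analysis device that makes the ``local error $\times$ number of steps'' structure explicit. Both yield the same rate $(\eta n)^{1/\alpha}$ with constants depending only on $B,T,\alpha,\sigma$, and both implicitly require $\alpha\in(1,2)$ and $\eta\le 1$ (so that $\eta\le(\eta n)^{1/\alpha}$), which you correctly flag.
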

In comparison to the standard error estimates in the Euler-Maruyama scheme concerning only the stepsize $\eta$, the additional $n$-dependence is because here we consider the supremum of the approximation error over all $i\le n$, which involves the expectation of the supremum of the modulus of $n$ independent $\alpha$-stable random variables.

Next, we start from the system \eqref{eq:SGF} and consider the case where $n\to \infty$. In this limit, we obtain the following 
% Furthermore, if the number of neurons $n$ is sent to infinity, the $\theta^{i,n}_t$s are asymptotically independent (see Theorem \ref{thm:poc}), evolving according to the 
McKean-Vlasov-type stochastic differential equation: 
\begin{equation}\label{eq:McKean–Vlasov}
    % \begin{cases}  
    \diff \theta^\infty_t = b(\theta^\infty_t,[\theta^\infty_t]) \diff t + \diff \mathrm{L}_t % \\
     % [\theta^\infty_0]=\mu\in \mathcal{P}(\R^d),
     % \end{cases}
\end{equation}
where $(\mathrm{L}_t)_{t\ge 0}$ is an $\alpha$-stable process and $[\theta^\infty_t]$ denotes the distribution of $\theta^\infty_t$. The existence and uniqueness of a strong solution to \eqref{eq:McKean–Vlasov} are given by \citet{Cavallazzi}. Moreover, for any positive $T$, $\expec{\sup_{t\le T}\|\theta^\infty_t\|^\alpha}<+\infty.$ This SDE with measure-dependent coefficients turns out to be a useful mechanism for analyzing the behavior of neural networks and provides insights into the effects of noise on the learning dynamics.

In this step, we will link the system \eqref{eq:SGF} to its limit \eqref{eq:McKean–Vlasov}, which is 
% An important ingredient in the proof towards compressibility is the 
a strong uniform propagation of chaos result for the weights. 
% $\theta^{i,n}_t$, $i=1, \ldots, n$, of $n$ neurons during the stochastic gradient flow with independent injected $\alpha$-stable L\'evy noise. 
The next result shows that, when $n$ is sufficiently large, the trajectories of weights asymptotically behave as i.i.d.\ solutions to \eqref{eq:McKean–Vlasov}.
% the McKean-Vlasov equation associated to common sequence of $\alpha$-stable diffusion terms.   

\begin{theorem}
\label{thm:poc}
Following the existence and uniqueness of strong solutions to \eqref{eq:SGF} and \eqref{eq:McKean–Vlasov}, let $
(\theta^{i,\infty}_t)_{t\ge 0}$ be solutions to the McKean-Vlasov equation \eqref{eq:McKean–Vlasov} and $(\theta^{i,n}_t)_{t\ge 0}$ be solutions to \eqref{eq:SGF} associated with same realization of $\alpha$-stable processes $(\mathrm{L}^i_t)_{t\ge 0}$ for each $i$. Suppose that $(\mathrm{L}^i_t)_{t\ge 0}$ are independent. Then there exists $C$ depending on $T, B$ such that
\[
\expec{\sup_{t\le T}\sup_{i\le n}|\theta^{i,n}_t - \theta^{i,\infty}_t|} \le   \frac{C}{\sqrt{n}} 
\]
\end{theorem}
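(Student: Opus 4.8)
The plan is to couple the two systems synchronously. Alongside the $n$-particle diffusion \eqref{eq:SGF}, run $n$ independent copies $(\theta^{i,\infty}_t)_{i\le n}$ of the McKean--Vlasov SDE \eqref{eq:McKean–Vlasov}, the $i$-th one driven by the \emph{same} $\alpha$-stable process $\mathrm{L}^i$ and started from the \emph{same} point $\xi^{i,n}=\theta^{i,n}_0$ as particle $i$ of \eqref{eq:SGF}. Since the noise is additive and the driving processes coincide, it cancels pathwise, leaving
\[
\theta^{i,n}_t-\theta^{i,\infty}_t=\int_0^t\Big(b(\theta^{i,n}_s,\mu^n_s)-b(\theta^{i,\infty}_s,[\theta^\infty_s])\Big)\diff s,
\]
where $\mu^n_s=\frac1n\sum_j\delta_{\theta^{j,n}_s}$. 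Introducing the empirical measure $\bar\mu^n_s:=\frac1n\sum_j\delta_{\theta^{j,\infty}_s}$ of the i.i.d.\ limiting particles, I would split the integrand into a spatial part $b(\theta^{i,n}_s,\mu^n_s)-b(\theta^{i,\infty}_s,\mu^n_s)$, a measure-stability part $b(\theta^{i,\infty}_s,\mu^n_s)-b(\theta^{i,\infty}_s,\bar\mu^n_s)$, and a pure fluctuation part $b(\theta^{i,\infty}_s,\bar\mu^n_s)-b(\theta^{i,\infty}_s,[\theta^\infty_s])$, and then bound $\sup_{i\le n}$ of each.

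The first two parts are handled by elementary regularity of $b$ implied by Assumption~\ref{assump:coefficients_regularity}. Writing $b(\theta,\mu)=-\mathbb{E}_{(x,y)\sim\pi}[\partial_1\ell(\mu(h_x(\cdot)),y)\nabla h_x(\theta)]$ and using $|\partial_1\ell|,|\partial_1^2\ell|\le\Phi(y)$ and $\|\nabla h_x\|_\infty,\|\nabla^2 h_x\|_\infty\le\Psi(x)$, one gets: (a) $\|b\|_\infty\le\expec{\Phi(y)\Psi(x)}\le B$; (b) $b(\cdot,\mu)$ is $\expec{\Phi(y)\Psi(x)}$-Lipschitz, uniformly in $\mu$; and (c), the crucial structural point, the measure enters only through the linear functional $\mu\mapsto\mu(h_x(\cdot))$, so $|b(\theta,\mu)-b(\theta,\nu)|\le\mathbb{E}_{(x,y)\sim\pi}[\Phi(y)\Psi(x)\,|\mu(h_x)-\nu(h_x)|]$; the constants $\expec{\Phi\Psi}$ and $\expec{\Phi\Psi^2}$ are finite ($\le B$ resp.\ $\le B^2$) by Cauchy--Schwarz against the moment bound $\expec{\Psi^2(1+\Phi^2)}\le B^2$. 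By (b) the spatial part is $\le B\sup_{j\le n}|\theta^{j,n}_s-\theta^{j,\infty}_s|$; by (c), together with $|\mu^n_s(h_x)-\bar\mu^n_s(h_x)|\le\Psi(x)\,\frac1n\sum_j|\theta^{j,n}_s-\theta^{j,\infty}_s|$, the measure-stability part is $\le B^2\sup_{j\le n}|\theta^{j,n}_s-\theta^{j,\infty}_s|$. Hence, with $G_s:=\sup_{i\le n}|\theta^{i,n}_s-\theta^{i,\infty}_s|$, these two parts jointly contribute at most $(B+B^2)G_s$, uniformly in $i$.

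The heart of the proof --- and the source of the $n^{-1/2}$ rate, which improves on \cite{Cavallazzi} --- is the fluctuation part. Since $b$ depends on the measure only through $\mu\mapsto\mu(h_x)$ and $\bar\mu^n_s$ is the empirical measure of $n$ \emph{i.i.d.}\ samples of $[\theta^\infty_s]$, I only need to control the scalar average $\bar\mu^n_s(h_x)-[\theta^\infty_s](h_x)=\frac1n\sum_j(h_x(\theta^{j,\infty}_s)-\expec{h_x(\theta^{j,\infty}_s)})$ of the \emph{bounded} (by $\Psi(x)$) quantities $h_x(\theta^{j,\infty}_s)$ --- \emph{not} a Wasserstein distance $\mathcal{W}_1(\bar\mu^n_s,[\theta^\infty_s])$, whose rate in $\R^d$ is only $n^{-1/d}$ and is even more delicate for heavy-tailed marginals. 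A one-line variance bound for i.i.d.\ bounded summands then gives $\expec{|\bar\mu^n_s(h_x)-[\theta^\infty_s](h_x)|}\le 2\Psi(x)/\sqrt n$ uniformly in $s\le T$, and this quantity does not depend on $i$; so by (c) the fluctuation part, with its $\sup_i$, is bounded by some $F_s$ with $\expec{F_s}\le(2/\sqrt n)\,\expec{\Phi\Psi^2}\le 2B^2/\sqrt n$. Collecting the three bounds yields $G^*_T:=\sup_{t\le T}G_t\le(B+B^2)\int_0^T G^*_s\,\diff s+\int_0^T F_s\,\diff s$; taking expectations and applying Grönwall's inequality gives $\expec{G^*_T}\le(2B^2T/\sqrt n)\exp((B+B^2)T)=:C/\sqrt n$, the claimed bound with $C=C(T,B)$.

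The main obstacle is conceptual rather than computational: recognizing that the \emph{specific form} of $b$ --- measure-dependence through a bounded linear functional --- is exactly what lets a classical scalar central-limit rate survive the heavy tails, and arranging the coupling so that the $\alpha$-stable noise cancels pathwise and the fluctuation term decouples from the index $i$, so that the $\sup_{i\le n}$ costs nothing. The one genuine technicality is legitimizing the Grönwall step, which needs $\expec{G^*_T}<\infty$ a priori; this is not automatic for heavy-tailed dynamics but holds here because $b$ is bounded, there are finitely many particles, $\alpha\in(1,2)$, and the well-posedness theory for \eqref{eq:SGF}--\eqref{eq:McKean–Vlasov} already supplies finite moments of the running suprema --- and if one wants to be fully scrupulous one first runs the estimate up to a stopping time $\tau_M=\inf\{t:\sup_{i\le n}(\|\theta^{i,n}_t\|\vee\|\theta^{i,\infty}_t\|)>M\}$ and then sends $M\to\infty$ by monotone convergence.
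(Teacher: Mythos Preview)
Your proposal is correct and follows essentially the same route as the paper: synchronous coupling so the additive $\alpha$-stable noise cancels pathwise, a three-way split of the drift difference via the intermediate empirical measure $\bar\mu^n_s$, Lipschitz estimates from Assumption~\ref{assump:coefficients_regularity} for the first two pieces, the key $n^{-1/2}$ variance bound for the bounded i.i.d.\ average $\bar\mu^n_s(h_x)-[\theta^\infty_s](h_x)$ for the fluctuation piece, and Gr\"onwall to close. The only cosmetic difference is that the paper packages the Lipschitz property through an $L^2$-in-$x$ bound $\mathbb{E}_{x\sim\pi}[|\mu_1(h_x)-\mu_2(h_x)|^2]^{1/2}$ (its Lemma~\ref{lm:Lip_coeffi}) rather than your $L^1$ version, and invokes its integrability lemma (Lemma~\ref{lem:integrability}) where you argue the a~priori finiteness needed for Gr\"onwall directly.
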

Our result differs from the existing literature by taking the supremum over the indices $i$ before taking the expectation, which is obviously stronger than taking the supremum over $i$ outside the expectation. It is also worth mentioning that the $O(n^{-1/2})$ decreasing rate here is better, if $\alpha<2$, than the state of the art \citep{Cavallazzi} with classical Lipschitz assumptions on the coefficients of SDEs. The reason is that here, thanks to Assumption \ref{assump:coefficients_regularity}, we can benefit from the one-hidden-layer neural network structure.

% In this section, we gather the main technical contributions with the purpose of demonstrating Theorem \ref{thm:main}: propagation of chaos for a large system of heavy-tailed SDEs and the strong-error estimates for the Euler discretization of SDEs using SGD. We recall our setup in Section \ref{sec:setup}. Let us consider a supervised learning problem with objective function for $\Theta^n=(\theta^{1,n}, \ldots, \theta^{n,n})$
% \[
% R(\Theta^n) = \mathbb{E}_{(x,y)\sim \pi} \left[l\left( f_{\Theta^n}(x),y\right) \right],
% \]
% where the hypothesis function $f_{\Theta^n}(x) = \frac{1}{n}\sum_{i=1}^n h_x (\theta^{i,n})$ can be represented by a single-hidden-layer neural network with the weights of the second layler fixed to be $1/n$ for some non-linearity $h$. 

% When $n\to\infty$, \eqref{eq:SGF} converges to the following McKean-Vlasov equation where the coefficient depends on the distribution of the solution itself:
% \begin{equation}
%     \begin{cases}  
%     \diff \theta^\infty_t = b(\theta^\infty_t,[\theta^\infty_t]) \diff t + \diff \mathrm{L}_t \\
%      [\theta^\infty_0]=\mu\in \mathcal{P}(\R^d)
%      \end{cases},
% \end{equation}
% for which the existence and uniqueness of a strong solution to \eqref{eq:McKean–Vlasov} are given in \citep{Cavallazzi}. 

Finally, we are interested in the distributional properties of the McKean-Vlasov equation \eqref{eq:McKean–Vlasov}. 
% with heavy-tailed diffusion which elicits the compressibility. 
The following result establishes that   the marginal distributions of solutions to \eqref{eq:McKean–Vlasov} will have diverging second-order moments, hence, they will be heavy-tailed.
% the solution to stochatic differential equations driven by $\alpha$-stable diffusions under Assumption \ref{assump:coefficients_regularity}.

\begin{theorem}\label{thm:heavy-tail}
    Let $(\mathrm{L}_t)_{t\ge 0}$ be an $\alpha$-stable process. For any time $t$, let $\theta_t$ be the solution to \eqref{eq:McKean–Vlasov} with initialization $\theta_0$ which is independent of $(\mathrm{L}_t)_{t\ge 0}$ such that $\expec{\|\theta_0\|}<\infty$, then the following holds
    \[
    \expec{\|\theta^\infty_t\|^2} = +\infty.
    \]
\end{theorem}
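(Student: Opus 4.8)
The plan is to show that the $\alpha$-stable driving noise forces an infinite second moment onto $\theta^\infty_t$, because the drift in \eqref{eq:McKean--Vlasov} is bounded and hence cannot counteract the heavy tails produced by $\mathrm{L}_t$. First I would establish that the drift $b(\theta, [\theta])$ is uniformly bounded: under Assumption~\ref{assump:coefficients_regularity} we have $\|\partial_1\ell(\cdot,y)\|_\infty \le \Phi(y)$ and $\|\nabla h_x(\cdot)\|_\infty \le \Psi(x)$, so $\|b(\theta,\mu)\| \le \mathbb{E}_{(x,y)\sim\pi}[\Phi(y)\Psi(x)] \le B$ by the moment bound and Cauchy--Schwarz, uniformly in $\theta$ and $\mu$. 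Consequently, writing the mild/integral form of \eqref{eq:McKean--Vlasov},
\[
\theta^\infty_t = \theta^\infty_0 + \int_0^t b(\theta^\infty_s, [\theta^\infty_s])\,\diff s + \sigma \mathrm{L}_t,
\]
the finite-variation part $D_t := \theta^\infty_0 + \int_0^t b(\cdot)\,\diff s$ satisfies $\|D_t\| \le \|\theta^\infty_0\| + Bt$, which has finite first moment but — crucially — we only need that $\sigma\mathrm{L}_t = \theta^\infty_t - D_t$.

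The key step is a triangle-inequality / contradiction argument in $L^2$. Suppose for contradiction that $\mathbb{E}\|\theta^\infty_t\|^2 < \infty$. Since $\mathrm{L}_t \overset{(\mathrm{d})}{=} t^{1/\alpha}\mathrm{L}_1$ and $\mathrm{L}_1$ is $\alpha$-stable with $\alpha \in (1,2)$, we have $\mathbb{E}\|\mathrm{L}_t\|^2 = +\infty$ (for each of Type-I, II, III, since each coordinate, or the relevant linear functional, is a genuinely heavy-tailed symmetric $\alpha$-stable scalar with diverging second moment). But $\sigma \mathrm{L}_t = \theta^\infty_t - D_t$, and by Minkowski's inequality in $L^2$,
\[
\sigma \big(\mathbb{E}\|\mathrm{L}_t\|^2\big)^{1/2} = \big(\mathbb{E}\|\theta^\infty_t - D_t\|^2\big)^{1/2} \le \big(\mathbb{E}\|\theta^\infty_t\|^2\big)^{1/2} + \big(\mathbb{E}\|D_t\|^2\big)^{1/2}.
\]
The term $\mathbb{E}\|D_t\|^2 \le 2\mathbb{E}\|\theta^\infty_0\|^2 + 2B^2t^2$; here a subtlety arises because the hypothesis only gives $\mathbb{E}\|\theta^\infty_0\| < \infty$, not $\mathbb{E}\|\theta^\infty_0\|^2 < \infty$. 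To handle this cleanly I would instead work in $L^1$-type estimates combined with a tail comparison: from $\sigma\mathrm{L}_t = \theta^\infty_t - D_t$ and $\|D_t\| \le \|\theta^\infty_0\| + Bt$, for any threshold $R$,
\[
\mathbb{P}(\|\theta^\infty_t\| > R) \ge \mathbb{P}\big(\sigma\|\mathrm{L}_t\| > R + \|\theta^\infty_0\| + Bt\big) \ge \mathbb{P}\big(\sigma\|\mathrm{L}_t\| > 2R,\ \|\theta^\infty_0\| + Bt \le R\big),
\]
and since $\mathrm{L}_t$ is independent of $\theta^\infty_0$ and $\|\theta^\infty_0\|<\infty$ a.s., for $R$ large the second factor is close to $\mathbb{P}(\sigma\|\mathrm{L}_t\| > 2R) \gtrsim R^{-\alpha}$ by the regularly-varying tail of $\alpha$-stable laws. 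Then $\mathbb{E}\|\theta^\infty_t\|^2 = \int_0^\infty 2R\,\mathbb{P}(\|\theta^\infty_t\|>R)\,\diff R \gtrsim \int^\infty R^{1-\alpha}\,\diff R = +\infty$ since $\alpha < 2$, giving the claim.

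The main obstacle I anticipate is the low-moment initial condition: the triangle inequality in $L^2$ is not directly available because $\mathbb{E}\|\theta^\infty_0\|^2$ could be infinite, so the cleanest route is the tail-comparison argument above, which only uses almost-sure finiteness of $\|\theta^\infty_0\|$ and the boundedness of the drift to argue that a large value of $\mathrm{L}_t$ (a polynomially non-negligible event) forces a comparably large value of $\theta^\infty_t$. A second, more minor point is verifying the diverging second moment of $\|\mathrm{L}_t\|$ uniformly across the three noise types — for Type-I this is immediate since all coordinates equal a scalar $\alpha$-stable variable; for Type-II and Type-III one uses that a single coordinate is symmetric $\alpha$-stable (for Type-III, rotational invariance makes each marginal $\alpha$-stable), and $\mathbb{E}\|\mathrm{L}_t\|^2 \ge \mathbb{E}[(\mathrm{L}_t^{(1)})^2] = +\infty$. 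I should also note the degenerate case $\alpha = 2$ (where Type-II/III are Gaussian) is implicitly excluded since the theorem concerns genuinely heavy-tailed noise; the statement and its proof are intended for $\alpha \in (1,2)$.
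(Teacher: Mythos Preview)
Your proposal is correct, and the tail-comparison route you settle on is sound: from $\|\theta^\infty_t\|\ge \sigma\|\mathrm{L}_t\|-\|\theta^\infty_0\|-Bt$ together with the independence of $\theta^\infty_0$ and $\mathrm{L}_t$ and the regularly-varying tail of stable laws, you indeed get $\mathbb{P}(\|\theta^\infty_t\|>R)\gtrsim R^{-\alpha}$ for large $R$, which is stronger than the statement you need.

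The paper, however, takes precisely the first route you considered and then set aside. It expands $\|\theta^\infty_t\|^2=\|D_t\|^2+2\langle D_t,\mathrm{L}_t\rangle+\|\mathrm{L}_t\|^2$ with $D_t=\theta^\infty_0+\int_0^t b(\theta^\infty_s,[\theta^\infty_s])\,\diff s$. The obstacle you flagged---that $\mathbb{E}\|D_t\|^2$ need not be finite because only $\mathbb{E}\|\theta^\infty_0\|<\infty$ is assumed---is circumvented not by bounding $\mathbb{E}\|D_t\|^2$ but by simply dropping it, since $\|D_t\|^2\ge 0$. The cross term is handled via $|\langle D_t,\mathrm{L}_t\rangle|\le(\|\theta^\infty_0\|+Bt)\|\mathrm{L}_t\|$, whose expectation is finite by independence of $\theta^\infty_0$ from $\mathrm{L}_t$ and by $\mathbb{E}\|\mathrm{L}_t\|<\infty$ (only first moments are used). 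This yields $\mathbb{E}\|\theta^\infty_t\|^2\ge \mathbb{E}\|\mathrm{L}_t\|^2-(\text{finite})=+\infty$. So the paper's argument is shorter and needs only the divergence of $\mathbb{E}\|\mathrm{L}_t\|^2$, whereas your argument additionally delivers the precise tail index of $\theta^\infty_t$ at the price of invoking the regular-variation property of stable laws.
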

We remark that the result is weak in the sense that details on the tails of $\theta_t$ with respect to $\alpha$ and $t$ are implicit. However, it renders sufficient for our compressibility result
% in Frobenius norm of the matrix with independent $\theta^\infty_t$-distributed columns as 
in Theorem \ref{thm:main}. Now, having proved all the necessary ingredients, Theorem~\ref{thm:main} is obtained by accumulating the error bounds proven in Theorems~\ref{thm:euler} and \ref{thm:poc}, and applying \citep[Proposition 1]{compressible_distribution} along with Theorem~\ref{thm:heavy-tail}.

% the compressibility of one hidden layer neural networks .... criteria for pruning ??
% In our the experiments, we train two-layer neural networks of increasing widths $n=2\text{k},\ 5\text{k}, 10\text{k}$ to see the convergence phenomena of empirical distribution of parameters using SGD perturbed by stable-noise injection implemented in pytorch.

\paragraph{Additional theoretical results.}

In the Appendix, we investigate two other properties of the considered scheme. In Appendix~\ref{sec:incomp}, we prove that when the injected noise is not heavy-tailed (i.e., $\alpha$ is set to $2$ and the noise becomes Gaussian) and when the step-size goes to zero, the obtained network weights \emph{cannot} be compressible in terms of the notion we defined in Theorem~\ref{thm:main}. This shows that heavy-tails are instrumental in order to guarantee compressibility in our specific compression definition. 

In Appendix~\ref{sec:local}, we investigate the effects of the heavy tails on the training loss. In particular, we upper-bound the expected gradient norm, i.e., $\mathbb{E}\|\nabla R(\hat{\Theta}^n_k)\|^2$ and show that the gradient norm will be bounded by two terms: (i) one term that linearly goes to zero as $K$ increases, (ii) another term, that scales up with the noise scale $\sigma$. This result highlights the fact that injecting heavy-tailed noise introduces a trade-off: while the noise is beneficial in terms of compressibility, it might hurt the optimization performance. In the next section, we investigate this trade-off in different experiments.  

\section{Empirical Results}

% \melihc{Bold and blue text are comments from Melih.}
% \melih{Blue (non-bold) text are significant changes Melih made that you should see.}
% \meliht{Orange texts will be updated when we reestablish server access.}

In this section, we validate our theory with empirical results. Our goal is to investigate the effects of the heavy-tailed noise injection in SGD in terms of compressibility and the train/test performance.  %\utodo{can you prepare the code to be submitted as well? you need to delete all the folder names etc that can reveal our identity}
For our experiments we use the ECG5000 \citep{ECG5000}, MNIST \citep{mnist2010}, CIFAR10, and CIFAR100 \citep{cifar102009} datasets. By slightly stretching the scope of our theoretical framework, we also train the weights of the second layer instead of fixing them to $1/n$. We start our experiments with a single-hidden-layer neural network with ReLU activations and the cross entropy loss, applied on classification tasks. We then examine how well our results generalize to more complex architectures by conducting experiments using fully connected neural networks (FCN) with more hidden layers, as well as using convolutional neural networks (CNN). %Source code for our experiments will be made publicly available upon publication.

\begin{table*}[t]
\centering
\begin{tabular}{ |c|c| c| c| c| c|}
 \hline 
$\alpha$ & Train Acc. & Test Acc. & Pruning Ratio & Train Acc. a.p. & Test Acc. a.p.  \\
 \hline \hline
 no noise & $95.10 \pm 0.14$ & $94.32 \pm 0.27 $& $11.43 \pm 0.04$ & $94.20 \pm 1.13$ & $93.31 \pm 0.28$ \\
 \hline
1.75  & $95.04\pm 0.09$ & $94.19\pm 0.25$ & $50.35\pm 20.63$ & $90.20\pm 7.63$ & $90.32\pm 6.53$ \\ 
\hline
1.8 & $95.04\pm 0.09$ & $94.19\pm 0.25$ & $37.22\pm 20.51$ & $95.16\pm 0.89$ & $93.92\pm 0.54$\\ 
\hline
1.9 & $95.04\pm 0.09$& $94.19\pm 0.25$ & $24.89\pm 9.89$ & $94.84\pm 0.57$ & $94.32\pm 0.46$ \\
\hline
\end{tabular}
\caption{ECG5000, Type-I noise, $n=2$K.}
\label{tbl:ecg_1_2k}
\end{table*}

\begin{table*}[t]
\centering
\begin{tabular}{ |c|c| c| c| c| c|}
 \hline 
$\alpha$ & Train Acc. & Test Acc. & Pruning Ratio & Train Acc. a.p. & Test Acc. a.p.  \\
 \hline \hline
 no noise & $95.36 \pm 0.33$ & $94.30 \pm 0.29 $& $11.44 \pm 0.02$ & $94.12 \pm 3.09$ & $93.53 \pm 2.18$ \\
 \hline
1.75  & $95.36\pm 0.33$ & $94.30\pm 0.29$ & $60.23\pm 17.85$ & $91.16\pm 6.84$ & $90.75\pm 6.72$ \\ 
\hline
1.8 & $95.36\pm 0.33$ & $94.30\pm 0.29$ & $49.63\pm 6.17$ & $94.12\pm 1.43$ & $93.17\pm 1.46$\\ 
\hline
1.9 & $95.36\pm 0.33$& $94.30\pm 0.29$ & $30.05\pm 8.12$ & $94.44\pm 1.06$ & $93.86\pm 1.00$ \\
\hline
\end{tabular}
\caption{ECG5000, Type-I noise, $n=10$K.}
\label{tbl:ecg_1_10k}
\end{table*}

For SGD, the step size is chosen to be small enough to approximate the continuous dynamics given by the McKean-Vlasov equation in order to stay close to the theory, but also not too small so that SGD converges in a reasonable amount of time. We fix the batch size to be as large as possible within memory constraints. For all experiments, the training was continued until reaching 95\% accuracy on the training set.
As for the noise level $\sigma$, we try a range of values for each dataset and $n$, and we chose the largest $\sigma$ such that the perturbed SGD converges. Intuitively, we can expect that smaller $\alpha$ with heavier tails will lead to lower relative compression error. However, it does not guarantee better test performance: we will investigate the trade-offs between compression error and test performance more in detail below. All the experimentation details are given in Appendix~\ref{sec:app:exp}, and our source code includes the relevant implementation details: \url{https://github.com/mbarsbey/implicit-compressibility}.

\subsection{Experiments with ECG5000}
In our first experiment, we consider the ECG5000 dataset and choose the Type-I noise. Our goal is to investigate the effects $\alpha$ and $n$ over the performance. We repeat the experiments $5$ times and report and average and standard deviations in Tables~\ref{tbl:ecg_1_2k}-\ref{tbl:ecg_3_10k}. Here, for different cases, we monitor the training and test accuracies before and after pruning (a.p.), as well as the pruning ratio: the percentage of the weight matrix that can be pruned while keeping the $90\%$ of the squared norm of the original matrix\footnote{The pruning ratio has the same role of $\kappa$, whereas we fix the compression error to $0.1$ and find the largest $\kappa$ that satisfies this error threshold.}.

The results show that, even for a moderate number of neurons $n=2$K, the heavy-tailed noise results in a significant improvement in the compression capability of the neural network (Table \ref{tbl:ecg_1_2k}). For $\alpha =1.9$, we can see that the pruning ratio increases to $24.89\%$, whereas vanilla SGD can only be compressible with a rate $11.43\%$, and the test performance of the pruned model is superior compared to the latter. We also observe that decreasing $\alpha$ (i.e., increasing the heaviness of the tails) results in a better compression rate; yet, there is a tradeoff between this rate and the test performance. In Table~\ref{tbl:ecg_1_10k}, we repeat the same experiment for $n=10$K. We observe that the previous conclusions become even clearer in this case, as our theory applies to large $n$. For the case where $\alpha=1.75$, we obtain a pruning ratio of $60.23\%$ with test accuracy $90.75\%$, whereas for vanilla SGD the ratio is only $11.44\%$ with a test accuracy of $93.53\%$. 

We also investigate the impact of noise type, where we set $n=10$K and use the same setting as in Table~\ref{tbl:ecg_1_10k}. Tables~\ref{tbl:ecg_2_10k}-\ref{tbl:ecg_3_10k} illustrate the results. We observe that the choice of the noise type impacts both compressibility and accuracy. Type-III noise seems to demonstrate a similar pattern to Type-I, while achieving a worse compression rate overall. On the other hand, although Type-II noise bests Type-I in its performance under $\alpha=1.75$, it loses on performance and/or compression in the other two $\alpha$ values. Accordingly, we conclude Type-I noise to achieve a better tradeoff overall, and proceed to the remaining experiments with it.  

\begin{table*}[t]
\centering
\begin{tabular}{ |c|c| c| c| c| c|}
 \hline 
$\alpha$ & Train Acc. & Test Acc. & Pruning Ratio & Train Acc. a.p. & Test Acc. a.p.  \\
 \hline \hline
1.75 & $95.84 \pm 0.55$ & $94.59 \pm 0.41$ & $60.19 \pm 35.37$ & $92.00 \pm 2.92$ & $91.39 \pm 2.57$ \\ 
\hline
1.8 & $95.88 \pm 0.72$ & $94.61 \pm 0.51$ & $39.98 \pm 18.52$ & $93.72 \pm 2.91$ & $92.97 \pm 2.39$\\ 
\hline
1.9 & $95.60 \pm 0.49$ & $94.67 \pm 0.62$ & $31.34 \pm 21.05$ & $93.80 \pm 2.13$ & $93.00 \pm 2.03$ \\
\hline
\end{tabular}
\caption{ECG5000, Type-II noise, $n=10$K.}
\label{tbl:ecg_2_10k}
\end{table*}

\begin{table*}[t]
\centering
\begin{tabular}{ |c|c| c| c| c| c|}
 \hline 
$\alpha$ & Train Acc. & Test Acc. & Pruning Ratio & Train Acc. a.p. & Test Acc. a.p.  \\
 \hline \hline
1.75 & $95.92 \pm 1.01$ & $94.84 \pm 0.81$ & $57.34 \pm 15.70$ & $92.52 \pm 4.57$ & $92.19 \pm 4.21$ \\ 
\hline
1.8 & $96.04 \pm 0.80$ & $94.77 \pm 1.33$ & $45.24 \pm 24.32$ & $93.56 \pm 3.77$ & $92.72 \pm 3.08$\\ 
\hline
1.9 & $95.88 \pm 0.33$ & $94.71 \pm 0.47$ & $26.35 \pm 18.36$ & $94.44 \pm 1.73$ & $94.06 \pm 1.17$ \\
\hline
\end{tabular}
\caption{ECG5000, Type-III noise, $n=10$K.}
\label{tbl:ecg_3_10k}
\end{table*}

\subsection{Experiments with MNIST}
In our next experiment, we consider the MNIST dataset, set $n=5$K and use Type-I noise. Table~\ref{tbl:mnist_1_5k} illustrates the results as the average and the standard deviation of $5$ runs. Similar to the previous results, we observe that the injected noise has a visible benefit on compressibility. When $\alpha=1.9$, our approach doubles the compressibility of the vanilla SGD (from $10.58\%$ to $23.82\%$), while pruned test accuracy decreases only by $\sim1\%$. On the other hand, when we decrease $\alpha$, the pruning ratio goes up to $40.63\%$, while only compromising $\sim3\%$ of pruned test accuracy.

\begin{table*}[t]
\centering
\begin{tabular}{ |c|c| c| c| c| c|}
 \hline

$\alpha$ & Train Acc. & Test Acc. & Pruning Ratio & Train Acc. a.p.  & Test Acc. a.p.\\
\hline\hline
no noise &  $96.32 \pm0.68	 $  & $96.00 \pm0.48	 $  & $10.58 \pm  0.01	$  & $96.30\pm0.67	$  & $95.95 \pm 0.47$  \\
\hline
$1.75 $  & $95.48	\pm 0.20$ & $95.01\pm0.15	 $ & $40.63\pm 8.55	$ & $93.14	\pm 1.54	$ & $92.89\pm1.70$ \\
\hline
$1.8$ & $95.42	\pm0.25	 $  & $94.95\pm0.16	$  &$36.05\pm 6.53	$ & $93.62\pm 1.32	$  & $93.27\pm 1.33
$ \\ 
\hline	
$1.9 $ & $95.88	\pm 0.36	$ & $95.44	\pm 0.24	$ & $23.82	\pm5.89$  & $95.30\pm0.89	 $ & $94.94\pm 0.81$ \\  
\hline
\end{tabular}
    \caption{MNIST, Type-I noise, $n=5$K.}
    \label{tbl:mnist_1_5k}
    \vspace{-5pt}
\end{table*}

\begin{table*}[t]
\centering
\begin{tabular}{ |c|c| c| c| c| c|}
 \hline
$\alpha$ & Train Acc. & Test Acc. & Pruning Ratio & Train Acc. a.p.  & Test Acc. a.p.\\
\hline\hline 
no noise &  $96.52 \pm 0.85$  & $56.71 \pm 0.38$  & $11.60 \pm 0.09$  & $96.13 \pm 0.91$  & $56.31 \pm 0.50$  \\
\hline
$1.75 $  & $ 95.56 \pm 0.24$ & $51.60 \pm 0.22$ & $49.67 \pm 2.30$ & $95.28 \pm 0.23$ & $51.48 \pm 0.27$ \\
\hline
$1.8$ & $ 95.86 \pm 0.36$  & $52.36 \pm 0.31$  &$41.01 \pm 1.36$ & $95.61 \pm 0.51$  & $52.03 \pm 0.27$ \\ 
\hline	
$1.9 $ & $	96.08 \pm 0.21$ & $52.60 \pm 0.41$ & $30.25 \pm 1.95$ & $96.17 \pm 0.21$ & $52.65 \pm 0.36$ \\  
\hline
\end{tabular}
    \caption{CIFAR10, Type-I noise, $n=5$K.}
    \label{tbl:cifar10_1_5k}
\end{table*}

\subsection{Experiments with CIFAR10 and CIFAR100}
We now test our approach with datasets and model architectures that are relatively more realistic in a machine learning setting (see Appendix \ref{sec:app:exp} for full details). First, we conduct experiments with the CIFAR10 dataset using the architecture in the MNIST experiments above, where we set $n=5$K and use Type-I noise. We present our results as the average and standard deviation of 5 runs in Table~\ref{tbl:cifar10_1_5k}. We observe that the results are similarly positive for CIFAR10, where dramatic improvements in compressibility are obtained for a small cost to pruned test performance.

Importantly, in most practical discussions of compressibility \citep{blalock2020state}, it is also desired that the compressed network is \textit{robust} to compression in terms of performance: That is, the pruned network is expected to maintain its test performance in the face of pruning. To compare the networks trained under our approach to vanilla SGD in terms of robustness, we progressively prune more of the columns of each model, and examine the models' test accuracy under increasing pruning ratios (e.g. $\kappa = 0.1, 0.2, \dots$). The results are presented in Figure \ref{fig:cifar10_cifar10cnn_cifar100}'s column \subref{fig:cifar10_b}. Here we plot models' absolute and relative accuracy as a function of pruning ratio, where relative test accuracy refers to the test accuracy of a pruned model in proportion to its unpruned test accuracy. Our findings unequivocally demonstrate the advantage of our approach: Networks trained with heavy-tailed noise (of all three $\alpha$s) are not only more compressible, but are also more robust to pruning in terms of performance. 

\newcommand{\figsize}{0.25}

\begin{figure}[t]
     \centering
     \subfigure{\includegraphics[width=\figsize\textwidth]{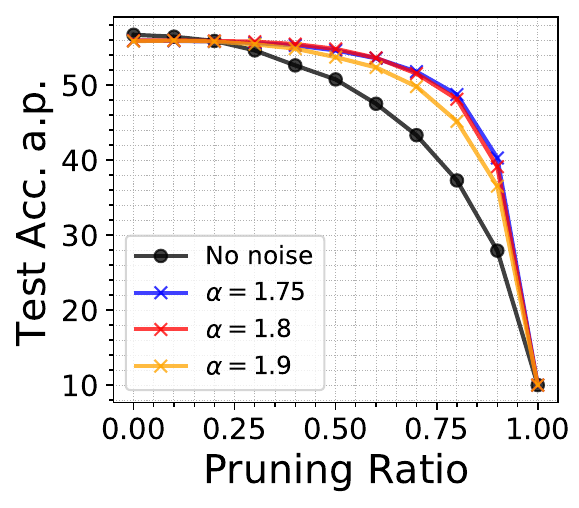}
    \label{fig:cifar10_a}}
\subfigure{\includegraphics[width=\figsize\textwidth]{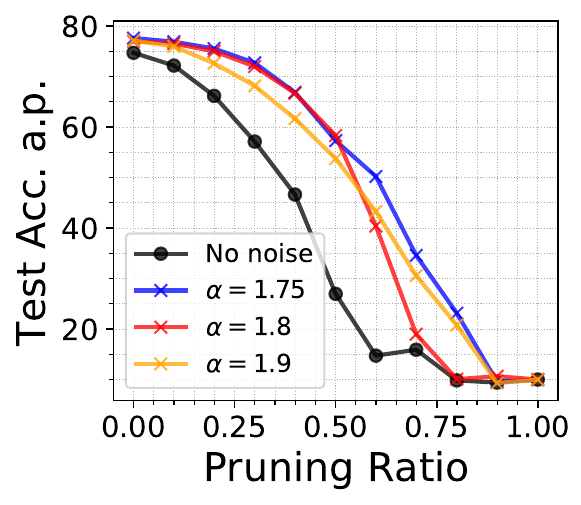}\label{fig:cifar10_cnn_a}}
\subfigure{\includegraphics[width=\figsize\textwidth]{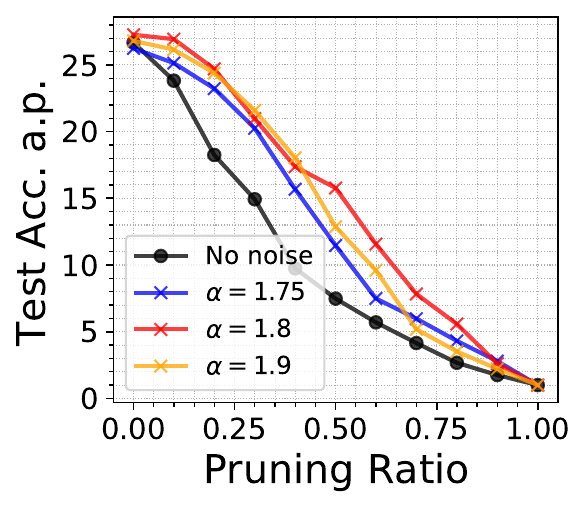}
    \label{fig:cifar100_a}}  
\setcounter{subfigure}{0}
\subfigure[]{\includegraphics[width=\figsize\textwidth]{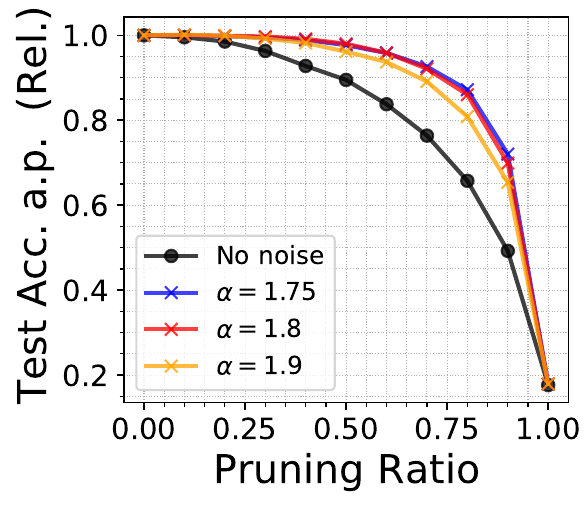}
    \label{fig:cifar10_b}}
     \subfigure[]{\includegraphics[width=\figsize\textwidth]{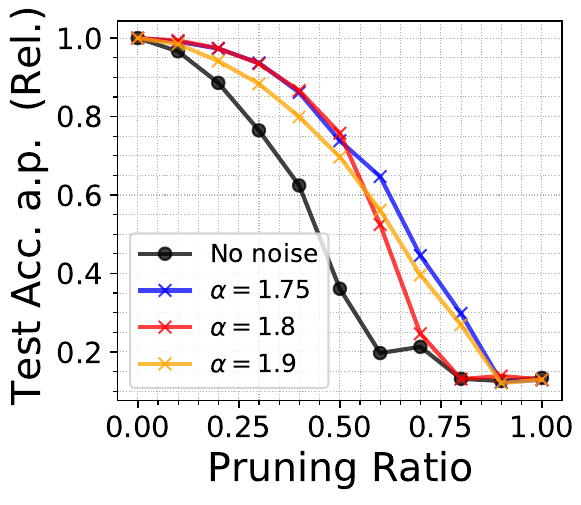}
    \label{fig:cifar10_cnn_b}}     \subfigure[]{\includegraphics[width=\figsize\textwidth]{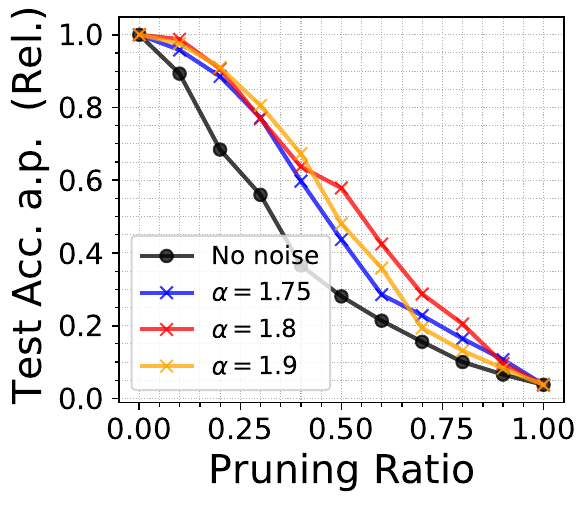}
    \label{fig:cifar100_b}}
    \vspace{-5pt}
     \caption{Absolute and relative test accuracies after pruning, as a function of pruning ratio. Column \subref{fig:cifar10_b}: CIFAR10, FCN with a single hidden layer, $n=5$K. Column \subref{fig:cifar10_cnn_b}: CIFAR10, CNN. Column \subref{fig:cifar100_b}: CIFAR100, FCN with 4 hidden layers, $n=2048$.
     }
\label{fig:cifar10_cifar10cnn_cifar100}

\vspace{-1.25em}
\end{figure}

\paragraph{Robustness with More Complex Architectures}
Inspired by the robustness results presented, we then test whether our results generalize to more complex, and arguably more realistic architectures. Though this means venturing beyond our theoretical setting, we find it crucial from a practical point of view to examine whether our methodology obtains robustness in such contexts. 
%We then test whether our method provides similar benefits when conducting training with other architectures, more specifically convolutional neural networks. 
For this purpose, we train a CNN model, a slightly modified version of the VGG11 model \citep{simonyanVeryDeepConvolutional2015} as described in Appendix \ref{sec:app:exp}, and conduct training on CIFAR10 dataset with noiseless and noise-added networks. The results in column \subref{fig:cifar10_cnn_b} of Figure \ref{fig:cifar10_cifar10cnn_cifar100} again demonstrate the advantage of our approach: the noise-added networks are much more robust to pruning compared to those trained with noiseless SGD.

Lastly, we test our approach using a more challenging classification dataset, CIFAR100. To match the complexity of the task, this time we utilize an FCN with 4 hidden layers of width 2048. We again conduct training until 95\% training accuracy. The results can be seen in column \subref{fig:cifar100_b} of Figure \ref{fig:cifar10_cifar10cnn_cifar100}, and are consistent with the preceding results: noise-added networks are consistently more robust to pruning than their clean-trained counterpart. 

\section{Conclusion}

We provided a methodological and theoretical framework for provably obtaining compressibility in mean-field neural networks. Our approach requires minimal modification for vanilla SGD and has the same computational complexity. By proving discretization error bounds and propagation of chaos results, we showed that the resulting algorithm is guaranteed to provide compressible parameters. We tested our approach through several experiments, where we showed that in most cases the proposed approach achieves high compressibility, while only slightly compromising accuracy. Moreover, we showed that our methodology produces models that are more robust to pruning in terms of test performance, even with architectures that are beyond our theoretical setting, speaking to the promise of our approach from a practical point of view.

The limitations of our approach are as follows: (i) we consider mean-field networks, it would be of interest to generalize our theoretical results to more sophisticated architectures, (ii) adaptive optimizers are frequently used in various fields of machine learning, thus extending our results to such optimization schemes would be beneficial, (iii) an improved understanding of how data distribution, learning rate, noise type, and architecture interact to produce compressibility would help extend our results to various application domains.%, many of which would benefit from such a practical method of obtaining compressibility.%focused on the compressibility; yet, the noise injection also has an effect on the train/test accuracy \melihc{We should either remove or deemphasize both these points now}. Hence, an investigation of the noise injection on the training loss needs to be performed to understand the bigger picture.
% \paragraph{Impact statement.} Due to the mostly theoretical nature of our paper, it does not have a direct negative social impact. Any potential downstream practical applications are to lead to decreased resource consumption for storage and inference.

% \newpage 

\section*{Acknowledgments}
We thank Alain Durmus and Milad Sefidgaran for helpful suggestions. Umut \c{S}im\c{s}ekli's research is supported by the French government under management of
Agence Nationale de la Recherche as part of the “Investissements d’avenir” program, reference
ANR-19-P3IA-0001 (PRAIRIE 3IA Institute) and the European Research Council Starting Grant
DYNASTY – 101039676.

\bibliography{example_paper}

\newpage 

\appendix

\onecolumn

\begin{center}

\Large \bf Implicit Compressibility of Overparametrized Neural Networks via Heavy-Tailed Noisy Gradient Descent \vspace{3pt}\\ {\normalsize APPENDIX}

\end{center}

The Appendix is organized as follows. 
\begin{itemize}
    \item In Section A, we discuss incompressibility under injection of Gaussian noise. 
    \item In Section B, we provide insights about the local convergence properties of the SGD algorithm described in the paper.
    \item In Section C, we provide technical lemmas that are useful for proving Theorem \ref{thm:main}, Theorem \ref{thm:euler} and Theorem \ref{thm:poc}. 
    \item In section D, we provide proofs of the theoretical results in the main paper.
    \item In section E, we present the details for our experiments.
    \item In Section F, implications of our compressibility studies on federated learning are discussed.
\end{itemize}

\section{Incompressibility of the Brownian Case}

\label{sec:incomp}

In this section, we show that injection of Gaussian noise rather than heavy-tailed noise does not result in compressibility in the sense of the notion defined in Theorem~\ref{thm:main}. 
More precisely, consider the following SDE, which is equivalent to  \eqref{eq:SGF} if $\alpha =2$:
\begin{equation}
\label{eq:SGFBrownian}
    % \begin{cases}  
   \diff \theta^{i,n}_t =   b(\theta^{i,n}_t,\mu^n_t) \diff t + \sigma\diff \mathrm{B}^{i,n}_t %\\
    % \theta^{i,n}_0 \sim \mu_0 \in \mathcal{P}(\R^d),
     % \end{cases}
\end{equation}
where $\mathrm{B}^{i,n}_t$,~$i=1,\ldots,n$, denote $n$ independent Brownian motion. The next result shows that the neural network trained according to~\eqref{eq:SGFBrownian} is not compressible, in the sense that there exists at least one value of the compression level $\kappa$ for which the conclusion of Theorem \ref{thm:main} does not hold. 

\begin{proposition}\label{prop:non-compressibility}
Suppose that Assumption~\ref{assump:coefficients_regularity} holds and let $\hat\Theta^n_t \in \mathbb{R}^{p \times n}$ be the matrix with columns being all the parameters $\theta^{i,n}_t$, $i=1,\ldots,n$ obtained by the recursion \eqref{eq:SGFBrownian}. Then, there exists a relative compression error $\varepsilon$ such that for any $\kappa>0$,
\[
\limsup_{n\to\infty} 
\frac{\left\|\hat\Theta^{(\kappa n)}_t - \hat\Theta^n_t \right\|_F}{\left\|\hat\Theta^n_t \right\|_F} > \varepsilon .
\]
\end{proposition}
\begin{proof}
    In the mean-field scaling regime, \citep[Theorem 10]{PoC_gaussian} showed that when the SDEs are driven by Brownian motions, the iterates of SGD have finite second-order moments. Whereas independent samples of Gaussian random variables are not compressible in the sense of Theorem \ref{thm:main}, see \citep[Proposition 1]{compressible_distribution}. This completes the proof.
\end{proof}

\section{Local Convergence of SGD with $\alpha$-Stable Noise} 

\label{sec:local}

In this part, we provide insights about the local convergence properties of the SGD algorithm described by \eqref{eq:SGD}, as guided by \citep[Theorem 5]{csimcsekli2019heavy}.

\begin{proposition}
Let $n$ be the number of neurons in the one-hidden-layer neural network and recall that $\hat{\Theta}^n_k$ represents the matrix with columns being the individual neuron weights at iteration $k$ of the SGD described by \eqref{eq:SGD}. Under Assumption \ref{assump:coefficients_regularity}, for some $0<\gamma < \alpha-1$ and if $\sigma \eta^{1/\alpha-1}>1$, we have
\begin{align*}
&\min_{0 \leq k \leq K-1} \mathbb{E}\|\nabla R(\hat{\Theta}^n_k)\|^2 
\le \frac{R(\hat\Theta^n_0) - R^*}{Kn \eta} + \frac{(2B)^{2+\gamma}\sigma^{1+\gamma}}{(1+\gamma)n^2} \eta^{\frac{\gamma+1-\alpha}{\alpha}}.
\end{align*}
If particular, if $\eta$ is chosen such that $\eta\in (n^{2\alpha/(\gamma+1-\alpha)+\epsilon}, \sigma^{\alpha/(\alpha-1)})$ for some $\epsilon>0$ small enough, the upper bound goes to $0$ as $K$ and $n$ go to infinity.
\end{proposition}

\begin{proof}
 It suffices to show that A3 and A7 in \citep{csimcsekli2019heavy} holds with stepsize $\eta n$, $M=B/n$ (in A3) and $\sigma^{1+\gamma}_{\gamma} =  2(2B/n)^{1+\gamma}(\sigma \eta^{1/\alpha-1})^{1+\gamma}$ (in A7), where $B$ is the constant in Assumption \ref{assump:coefficients_regularity}.
 
 Since every bounded Lipschitz-function is $\gamma$-Holder continuous for every $\gamma\in (0,1]$, the $\gamma$-Holderness A3 follows from Lemma \ref{lm:Lip_coeffi} follows directly from Assumption \ref{assump:coefficients_regularity} by taking $\gamma\in (0,\alpha -1)$. It is obvious that the noise $\sigma \eta^{1/\alpha-1} X^{i,n}_k$ of the noisy stochastic gradient noise in \eqref{eq:SGD} is unbiased. Then it remains to verify that the gradient descent updates satisfy certain moment bounds. To this end, note that we have
\begin{align*}
&\expec{\|\nabla R(\Theta^n_k) - \sigma \eta^{1/\alpha-1}X^{i,n}_k/n\|^{1+\gamma}|\hat\Theta^n_k} \\ \le
&2^{1+\gamma}\expec{\nabla R(\Theta^n_k) \|^{1+\gamma}|\hat\Theta^n_k} + 2^{1+\gamma}\expec{\|\sigma \eta^{1/\alpha-1}X^{i,n}_k/n\|^{1+\gamma}
 } \\ \le
& (2B)^{1+\gamma}(\sigma \eta^{1/\alpha-1}/n)^{1+\gamma} + (2B/n)^{1+\gamma} \\ \le
& 2(2B/n)^{1+\gamma}(\sigma \eta^{1/\alpha-1})^{1+\gamma}
\end{align*}
Finally, using~\citep[Theorem 5]{csimcsekli2019heavy} completes the proof.
\end{proof}

\section{Technical Lemmas}

\begin{lemma}\label{lm:Lip_coeffi}
   Under Assumption \ref{assump:coefficients_regularity}, it holds that
    \[
    \|b(\theta_1,\mu_1) - b(\theta_2,\mu_2)\| \le 
    B \cdot \big(\|\theta_1-\theta_2\| + \mathbb{E}_{x\sim\pi}\left[|\mu_1(h_x(\cdot)) - \mu_2(h_x(\cdot))|^2\right]^{\frac{1}{2}}\big).
    \]
    Moreover, $\|b(\cdot,\cdot)\|_\infty \le B$, and if $\mu_1 = \frac{1}{n}\sum_{i=1}^n\delta_{\theta^i_1}$, $\mu_2 = \frac{1}{n}\sum_{i=1}^n\delta_{\theta^i_2}$,
    \[
     \|b(\theta_1,\mu_1) - b(\theta_2,\mu_2)\| \le B\|\theta_1 - \theta_2\| + \frac{B}{n}\sum_{i=1}^n \|\theta^i_1 - \theta^i_2\|.
    \]
\end{lemma}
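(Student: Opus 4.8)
The plan is to unpack the definition of the drift, $b(\theta,\mu) = -\mathbb{E}_{(x,y)\sim\pi}\big[\partial_1\ell(\mu(h_x(\cdot)),y)\,\nabla h_x(\theta)\big]$, estimate the integrand pointwise in $(x,y)$ using the smoothness bounds in Assumption~\ref{assump:coefficients_regularity}, and integrate only at the very end via Cauchy--Schwarz against the moment bound $\mathbb{E}_{(x,y)\sim\pi}[\Psi^2(x)(1+\Phi^2(y))]\le B^2$. For the $L^\infty$ bound I would simply write $\|b(\theta,\mu)\|\le\mathbb{E}[|\partial_1\ell(\mu(h_x),y)|\,\|\nabla h_x(\theta)\|]\le\mathbb{E}[\Phi(y)\Psi(x)]$ and then $\mathbb{E}[\Phi(y)\Psi(x)]\le\mathbb{E}[\Psi^2(x)\Phi^2(y)]^{1/2}\le\mathbb{E}[\Psi^2(x)(1+\Phi^2(y))]^{1/2}\le B$.

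For the main Lipschitz inequality I would add and subtract the cross term $\partial_1\ell(\mu_1(h_x),y)\,\nabla h_x(\theta_2)$ inside the expectation, splitting $b(\theta_1,\mu_1)-b(\theta_2,\mu_2)$ into one term carrying the factor $\nabla h_x(\theta_1)-\nabla h_x(\theta_2)$ and one carrying the factor $\partial_1\ell(\mu_1(h_x),y)-\partial_1\ell(\mu_2(h_x),y)$. For the first term the mean value inequality together with the bound on $\|\nabla^2 h_x\|_\infty$ gives $\|\nabla h_x(\theta_1)-\nabla h_x(\theta_2)\|\le\Psi(x)\|\theta_1-\theta_2\|$, and combined with $|\partial_1\ell(\cdot,y)|\le\Phi(y)$ this contributes at most $\mathbb{E}[\Phi(y)\Psi(x)]\,\|\theta_1-\theta_2\|\le B\|\theta_1-\theta_2\|$. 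For the second term the bound on $\|\partial_1^2\ell(\cdot,y)\|_\infty$ gives $|\partial_1\ell(\mu_1(h_x),y)-\partial_1\ell(\mu_2(h_x),y)|\le\Phi(y)\,|\mu_1(h_x)-\mu_2(h_x)|$, and with $\|\nabla h_x(\theta_2)\|\le\Psi(x)$ this contributes at most $\mathbb{E}[\Phi(y)\Psi(x)\,|\mu_1(h_x)-\mu_2(h_x)|]$; since $|\mu_1(h_x)-\mu_2(h_x)|$ depends on $x$ only, a final Cauchy--Schwarz bounds this by $\mathbb{E}[\Psi^2(x)\Phi^2(y)]^{1/2}\,\mathbb{E}_{x\sim\pi}[|\mu_1(h_x)-\mu_2(h_x)|^2]^{1/2}\le B\,\mathbb{E}_{x\sim\pi}[|\mu_1(h_x)-\mu_2(h_x)|^2]^{1/2}$. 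Adding the two contributions yields the stated inequality.

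For the empirical-measure case I would take $\mu_j=\frac1n\sum_{i=1}^n\delta_{\theta_j^i}$, so that $\mu_1(h_x)-\mu_2(h_x)=\frac1n\sum_{i=1}^n\big(h_x(\theta_1^i)-h_x(\theta_2^i)\big)$ and hence, using $\|\nabla h_x\|_\infty\le\Psi(x)$, $|\mu_1(h_x)-\mu_2(h_x)|\le\Psi(x)\cdot\frac1n\sum_{i=1}^n\|\theta_1^i-\theta_2^i\|$; inserting this into the Lipschitz estimate from the previous paragraph and using $\mathbb{E}_{x\sim\pi}[\Psi^2(x)]^{1/2}\le B$ (which follows from the moment bound since $1+\Phi^2(y)\ge1$) pulls the deterministic average out of the expectation and gives the claim. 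Along this route the coefficient of $\frac1n\sum_i\|\theta_1^i-\theta_2^i\|$ comes out as a power of $B$; this is harmless, as one may absorb it into $B$ by enlarging the constant (or replace $B$ by $B\vee B^2$ throughout).

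The argument has no genuine obstacle; the only points that need care are (i) keeping the measure discrepancy in the $L^2$ form $\mathbb{E}_{x\sim\pi}[|\mu_1(h_x)-\mu_2(h_x)|^2]^{1/2}$ rather than collapsing it to an $L^1$ quantity, since this $L^2$ form is precisely what the downstream propagation-of-chaos and Euler-discretization estimates consume, and (ii) tracking carefully whether each expectation runs over $(x,y)\sim\pi$ or only over its $x$-marginal, so that every Cauchy--Schwarz step pairs the correct factors and the single moment constant $B$ is the only thing that enters.
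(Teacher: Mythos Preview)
Your proposal is correct and follows essentially the same route as the paper: the same add-and-subtract with the cross term $b(\theta_2,\mu_1)$, the same pointwise mean-value bounds from Assumption~\ref{assump:coefficients_regularity}, and the same Cauchy--Schwarz step against $\mathbb{E}[\Psi^2(x)(1+\Phi^2(y))]\le B^2$. Your remark that the empirical-measure constant literally comes out as $B^2$ rather than $B$ is accurate and matches what a careful reading of the paper's own chain of inequalities yields; it is indeed harmless for the downstream arguments.
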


\begin{proof} 
Recall that
    \[
    b(\theta,\mu) = -\expec{\partial_1 l(\mu(h_x(\cdot)),y)\nabla h_x(\theta)}.
    \]
Then it follows from the triangular inequality that
\begin{equation}\label{eq:Lip_1}
    \|b(\theta_1,\mu_1) - b(\theta_2,\mu_2)\| 
    \le  \|b(\theta_1,\mu_1) - b(\theta_2,\mu_1)\| + \|b(\theta_2,\mu_1) - b(\theta_2,\mu_2)\|.
\end{equation}
The first term is upper bounded by
\begin{equation}\label{eq:Lip_2}
\begin{aligned}
  \|b(\theta_1,\mu_1) - b(\theta_2,\mu_1)\| \le
   & \expec{\|\partial_1 l(\cdot,y)\|_\infty \cdot \|\nabla^2 h_x\|_\infty }\cdot\|\theta_2-\theta_1\| \\ \le
   & \expec{\Phi(y)\Psi(x)} \cdot \|\theta_1 - \theta_2\| \\ \le
   & \left(\expec{\Phi^2(y)\Psi^2(x)}\right)^{1/2}\cdot \|\theta_1 - \theta_2\| \\ \le
   & B \cdot \|\theta_1-\theta_2\|.
\end{aligned}
\end{equation}
The second term is upper bounded by
\begin{equation}\label{eq:Lip_3}
\begin{aligned}
\|b(\theta_2,\mu_1) - b(\theta_2,\mu_2)\|  \le
& \expec{\| \partial^2_1 l(\cdot,y)\|_\infty \cdot \|\nabla h_x(\cdot) \|_\infty \cdot |\mu_1(h_x(\cdot)) - \mu_2(h_x(\cdot)) |}\\  \le
& \left(\expec{\Phi^2(y)\Psi^2(x)}\right)^{1/2} \expec{|\mu_1(h_x(\cdot)) - \mu_2(h_x(\cdot)) |^2}^{1/2} \\ \le
& B \cdot \expec{|\mu_1(h_x(\cdot)) - \mu_2(h_x(\cdot)) |^2}^{1/2}.
\end{aligned}
\end{equation}
The first inequality then follows by combining \eqref{eq:Lip_1}, \eqref{eq:Lip_2} and \eqref{eq:Lip_3}.

For the boundedness of $b$ in the norm infinity, it is not difficult to see that
\begin{align*}
b(\theta,\mu) &= -\mathbb{E}[\partial_1 l(\mu(h_x(\cdot)), y)\nabla h_x(\theta)]  \\
&\le \expec{\Phi(y)\Psi(x)} \\
&\le B.
\end{align*}

 The proof of the last inequality follows by using  the first bound and the Cauchy-Schwarz inequality as
\begin{align*}
\|b(\theta_1,\mu_1)-b(\theta_2,\mu_2)\| 
\le & B\|\theta_1 - \theta_2\| + \frac{1}{n} \mathbb{E}_{x\sim\pi}\left[\left(\sum_{i=1}^n h_x(\theta^i_1) - h_x(\theta^i_2) \right)^2 \right]^{1/2}   \\
\le & B\|\theta_1-\theta_2\| + \frac{1}{n}\mathbb{E}_{x\sim\pi}\left[\|\nabla h_x\|_\infty \left(\sum_{i=1}^n \|\theta^i_1-\theta^i_2\| \right)^2 \right]^{1/2} \\
\le & B\|\theta_1 - \theta_2\| + \frac{1}{n}\mathbb{E}_{x\sim\pi}[\Psi^2(x)]^{1/2} \cdot \sum_{i=1}^n \|\theta^i_1-\theta^i_2\|\\
\le & B\|\theta_1-\theta_2\| + \frac{B}{n}\sum_{i=1}^n \|\theta^i_1 - \theta^i_2\|.
\end{align*}
This completes the proof.
\end{proof}

\subsection{Propagation of Chaos}

\begin{lemma}\label{lem:integrability}
Let $(\mathrm{L}_t)_{t\ge 0}$ be an $\alpha$-stable L\'evy process and let $(\mathcal{F}_t)_{t\ge 0}$ be the filtration generated by $(\mathrm{L}_t)_{t\ge 0}$. Then under Assumption \ref{assump:coefficients_regularity}, given the the initial condition $X_0=\xi$, there exists a unique adapted process $(X_t)_{t\in[0,T]}$ for all integrable datum $\xi \in L^1(\R^p)$ such that
\[
X_t = \xi + \int_0^t b(X_t,[X_t])\diff t + \mathrm{L}_t.
\]
Moreover the first moment of the supremum of the process is bounded 
\[
\expec{\sup_{t\le T}\| X_t \|} < +\infty.
\]
\end{lemma}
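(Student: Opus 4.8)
The plan is to construct the solution by a Picard iteration in the Banach space
\[
\mathbb{S}^1_T := \Big\{ (X_t)_{t\in[0,T]} \text{ adapted and c\`adl\`ag}:\ \|X\|_{\mathbb{S}^1_T}:=\expec{\sup_{t\le T}\|X_t\|}<\infty \Big\},
\]
where indistinguishable processes are identified; this space is complete. Two preliminary facts make the argument run. First, by the first and third items of Assumption~\ref{assump:coefficients_regularity} the map $h_x$ is $\Psi(x)$-Lipschitz, so for all $\mu_1,\mu_2\in\mathcal{P}(\R^p)$ one gets $|\mu_1(h_x(\cdot))-\mu_2(h_x(\cdot))|\le\Psi(x)\,W_1(\mu_1,\mu_2)$, where $W_1$ is the Wasserstein-$1$ distance; combined with Lemma~\ref{lm:Lip_coeffi} and $\expec{\Psi^2(x)}\le B^2$ this sharpens the coefficient bound to
\[
\|b(\theta_1,\mu_1)-b(\theta_2,\mu_2)\|\ \le\ B\|\theta_1-\theta_2\|+B^2\,W_1(\mu_1,\mu_2),\qquad \|b(\cdot,\cdot)\|_\infty\le B.
\]
Second, since $\alpha\in(1,2)$, $\|\mathrm{L}_t\|$ is a nonnegative submartingale, so Doob's $L^p$ maximal inequality for some $p\in(1,\alpha)$ together with $\expec{\|\mathrm{L}_T\|^p}<\infty$ gives $\expec{\sup_{t\le T}\|\mathrm{L}_t\|}<\infty$.

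Next I would introduce the map $\Gamma:\mathbb{S}^1_T\to\mathbb{S}^1_T$ defined by $\Gamma(X)_t:=\xi+\int_0^t b(X_s,[X_s])\,\diff s+\mathrm{L}_t$. It is well defined: boundedness of $b$ yields $\sup_{t\le T}\|\Gamma(X)_t\|\le\|\xi\|+BT+\sup_{t\le T}\|\mathrm{L}_t\|$, which is integrable since $\xi\in L^1(\R^p)$ and by the second fact above, and adaptedness and the c\`adl\`ag property are immediate. For $X,Y\in\mathbb{S}^1_T$, using the sharpened Lipschitz bound, $W_1([X_s],[Y_s])\le\expec{\|X_s-Y_s\|}$, and that $t\mapsto\int_0^t(\cdot)\,\diff s$ is nondecreasing,
\[
\expec{\sup_{t\le u}\|\Gamma(X)_t-\Gamma(Y)_t\|}\ \le\ (B+B^2)\int_0^u \expec{\sup_{r\le s}\|X_r-Y_r\|}\,\diff s,\qquad u\in[0,T].
\]
Iterating this inequality gives $\|\Gamma^n(X)-\Gamma^n(Y)\|_{\mathbb{S}^1_T}\le \frac{((B+B^2)T)^n}{n!}\|X-Y\|_{\mathbb{S}^1_T}$, so a high enough iterate of $\Gamma$ is a strict contraction and $\Gamma$ has a unique fixed point $X\in\mathbb{S}^1_T$; this $X$ is the claimed strong solution. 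For uniqueness among \emph{all} solutions, any solution with $\xi\in L^1$ automatically satisfies $\|X_t\|\le\|\xi\|+BT+\sup_{t\le T}\|\mathrm{L}_t\|$, hence lies in $\mathbb{S}^1_T$ and is a fixed point of $\Gamma$, so it coincides with the one just constructed. Taking expectations in this same pointwise bound gives $\expec{\sup_{t\le T}\|X_t\|}\le\expec{\|\xi\|}+BT+\expec{\sup_{t\le T}\|\mathrm{L}_t\|}<\infty$, which is the asserted moment estimate.

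The argument is essentially routine once Lemma~\ref{lm:Lip_coeffi} is in hand. The only place where the heavy-tailed jump structure of the driving noise genuinely enters is the submartingale/Doob step showing $\expec{\sup_{t\le T}\|\mathrm{L}_t\|}<\infty$: this is where the restriction $\alpha>1$ is indispensable, and without it the stated first-moment bound would fail. A minor additional point to check is that $s\mapsto b(X_s,[X_s])$ is jointly measurable, which follows from continuity of $b$ in both arguments — using that $\mu\mapsto\mu(h_x(\cdot))$ is weakly continuous since $h_x$ is bounded — together with the c\`adl\`ag regularity of $X$ (so that $s\mapsto[X_s]$ is right-continuous with left limits in the weak topology).
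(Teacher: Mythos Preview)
Your proof is correct. The paper, however, does not argue directly: it simply observes that Lemma~\ref{lm:Lip_coeffi} places the drift $b$ within the Lipschitz framework of \cite{Cavallazzi} (with the Wasserstein exponent $\beta=1$) and then invokes Theorem~1 of that reference for both existence/uniqueness and the moment bound. Your argument is a genuinely different, self-contained route: you carry out the Picard iteration explicitly in $\mathbb{S}^1_T$, exploiting the boundedness of $b$ from Lemma~\ref{lm:Lip_coeffi} to get the a priori estimate $\sup_{t\le T}\|X_t\|\le\|\xi\|+BT+\sup_{t\le T}\|\mathrm{L}_t\|$, and using Doob's maximal inequality on the submartingale $\|\mathrm{L}_t\|$ (this is exactly where the restriction $\alpha>1$ enters) to control the noise term. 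The advantage of your approach is that it is entirely elementary and independent of external McKean--Vlasov well-posedness theory; the paper's route is shorter on the page but black-boxes the analysis. Both rely on the same key input, namely the Lipschitz/boundedness estimates of Lemma~\ref{lm:Lip_coeffi}.
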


\begin{proof}
 The proof follows from Theorem 1 in \citep{Cavallazzi} by Lemma \ref{lm:Lip_coeffi} where $\beta$ is set to $1$.
\end{proof}

\subsection{Compression}

\begin{lemma}\label{lm:avr_inf}
Consider a non-integrable probability distribution $\mu$ taking values in $\mathbb{R}_+$ such that $\mathbb{E}_{X\sim \mu}[X]=+\infty$. Let $X_1,\ldots,X_n$ be $n$ i.i.d. copies distributed according to $\mu$. Then for any $C$ positive,
\[
\mathbb{P}\left[ \frac{1}{n} \sum_{i=1}^n X_i \le C \right] \xrightarrow{n\to\infty} 0. 
\]
\end{lemma}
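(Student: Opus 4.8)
The plan is to handle the non-integrability of $\mu$ by a truncation argument and then invoke the weak law of large numbers for the (bounded, hence integrable) truncated variables. The point is that we cannot apply a law of large numbers to the $X_i$ directly, since $\mathbb{E}_{X\sim\mu}[X]=+\infty$; but truncating at a level $M$ produces integrable variables whose mean can be made as large as we like, and monotonicity of the sums then transfers the conclusion back to the $X_i$.

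First I would fix $C>0$ and, for a truncation level $M>0$ to be chosen, set $X_i^{(M)} := \min(X_i, M)$. By the monotone convergence theorem, $m_M := \mathbb{E}\bigl[X_i^{(M)}\bigr] \uparrow \mathbb{E}_{X\sim\mu}[X] = +\infty$ as $M\to\infty$, so I can pick $M$ large enough that $m_M > 2C$. Next, since the $X_i^{(M)}$ are i.i.d., bounded by $M$, and in particular integrable, the weak law of large numbers gives $\frac1n\sum_{i=1}^n X_i^{(M)} \to m_M$ in probability as $n\to\infty$.

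Finally I would chain inequalities using $X_i \ge X_i^{(M)}$ pointwise, which yields $\frac1n\sum_{i=1}^n X_i \ge \frac1n\sum_{i=1}^n X_i^{(M)}$ and hence
\[
\mathbb{P}\left[\frac1n\sum_{i=1}^n X_i \le C\right] \le \mathbb{P}\left[\frac1n\sum_{i=1}^n X_i^{(M)} \le C\right] \le \mathbb{P}\left[\left|\frac1n\sum_{i=1}^n X_i^{(M)} - m_M\right| \ge m_M - C\right].
\]
Since $m_M - C > C > 0$ is a fixed positive constant, the right-hand side tends to $0$ by the convergence in probability from the previous step, which completes the proof.

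I do not anticipate a serious obstacle here: the only conceptual step is recognizing that the non-integrability forces a truncation rather than a direct LLN, after which everything is routine (monotone convergence, WLLN, monotonicity of partial sums). One minor point to state carefully is that the WLLN is applied only to the truncated variables, so boundedness already guarantees the integrability it requires — no extra moment assumption on $\mu$ beyond $\mathbb{E}[X]=+\infty$ is needed, consistent with the hypotheses of the statement.
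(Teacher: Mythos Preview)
Your proof is correct and follows essentially the same approach as the paper: truncate at a level $M$ so that the truncated mean exceeds $C$, apply a law of large numbers to the bounded truncated variables, and use the pointwise domination $X_i \ge X_i^{(M)}$ to transfer the conclusion. The only cosmetic differences are that the paper picks the cutoff so the truncated mean equals $C+1$ and invokes the strong law (then passes to probabilities), whereas you pick $m_M>2C$ and use the weak law directly; both are equally valid.
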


\begin{proof}
    Using the assumption that $\mu$ is non-integrable, let $K$ be a cutoff level for $\mu$ such that
    \[
    \mathbb{E}_{X\sim\mu}[\max (X
    ,K)] = C+1.
    \]
    Therefore by the law of large numbers, when goes to infinity,
    \[
    \lim_{n\to\infty}\frac{1}{n}\sum_{i=1}^n \max(X_i,K) = C+1 \qquad \text{almost surely}.
    \]
    Finally, note that
    \[  \frac{1}{n}\liminf_{n\to\infty}\sum_{i=1}^n X_i \ge \frac{1}{n}\lim_{n\to\infty} \sum_{i=1}^n \max(X_i,K), 
    \]
    which is lower bounded by $(C+1)$ almost surely. Thus the probability that $\frac{1}{n}\sum_{i=1}^n X_i$ be smaller than $C$ vanishes for large (infinite) values of $n$.
\end{proof}

\section{Proofs}

\subsection{Proof of Theorem~\ref{thm:heavy-tail}}
\begin{proof}
Recall that $\theta_t = \theta_0 + \int_0^t b(\theta_s,[\theta_s])\diff s + \mathrm{L}_t$, then
    \begin{align*}
        \expec{\|\theta_t\|^2}  =
        & \expec{\left\langle\theta_0 + \int_0^t b(\theta_s,[\theta_s])\diff s + \mathrm{L}_t, \theta_0 + \int_0^t b(\theta_s,[\theta_s])\diff s + \mathrm{L}_t\right\rangle}\\ =
        & \expec{\left\|\theta_0+ \int_0^t b(\theta_s,[\theta_s]) \diff s\right\|^2} + 2\expec{\left\langle \theta_0+ \int_0^t b(\theta_s,[\theta_s]) \diff s, \mathrm{L}_t\right\rangle} \\
        &+ \expec{\|\mathrm{L}_t\|^2}\\ \ge
        & \expec{\|\mathrm{L}_t\|^2} - 2\expec{ \|\theta_0\| \cdot \|\mathrm{L}_t\|} - 2\expec{t\|b(\cdot)\|_\infty \cdot \|\mathrm{L}_t\|}  \\ \ge
        & \expec{\|\mathrm{L}_t\|^2} - 2\expec{ \|\theta_0\|} \expec{\|\mathrm{L}_t\|} - 2Bt \cdot\expec{\|\mathrm{L}_t\|},
    \end{align*}
where the last inequality follows from the independence between the initialization $\theta_0$ and the diffusion noise $(\mathrm{L}_t)_{t\ge 0}$ and by using Lemma \ref{lm:Lip_coeffi}. The proof is completed by noticing that
\[
\expec{\|\mathrm{L}_t\|^2} = \infty \quad \text{ and } \quad \expec{\|\theta_0\|}, \expec{\|\mathrm{L}_t\|} < \infty.
\]
\end{proof}

\subsection{Proof of Theorem~\ref{thm:poc}}
\begin{proof} 
By identification of the diffusion process $(\mathrm{L}^{i,n}_t)_{t\ge 0}$ in \eqref{eq:SGF} and \eqref{eq:McKean–Vlasov}, the difference of their solutions $\theta^{i,n}_t$ and $\theta^{i,\infty}_t$ for all $t\in[0,T]$ satisfies
\[
\theta^{i,n}_t - \theta^{i,\infty}_t = \int_0^t [b(\theta^{i,n}_s,\mu^n_s) - b(\theta^{i,\infty}_s, [\theta^{i,\infty}_s]) ]\diff s,
\]
where $\mu_t = \frac{1}{n}\sum\limits_{i=1}^n \delta_{\theta^{i,n}_t}$ and $[\theta^{i,\infty}_t]$ denotes the distribution of $\theta^{i,\infty}_t$. Using Lemma \ref{lm:Lip_coeffi},
\begin{equation}\label{eq:PoC_1}
\begin{aligned}
     \|\theta^{i,n}_t - \theta^{i,\infty}_t\| 
    \le & B\int_0^t \|\theta^{i,n}_s-\theta^{i,\infty}_s\|\diff s + B \int_0^t \mathbb{E}_{x\sim\pi}\big[|\mu^n_s(h_x(\cdot)) - [\theta^{i,\infty}_s](h_x(\cdot))|^2\big]^{1/2} \diff s \\
    \le & B \int_0^t \sup_{i\le n}\|\theta^{i,n}_s - \theta^{i,\infty}_s\| \diff s + B\int_0^t \mathbb{E}_{x\sim\pi}\left[|\mu^n_s(h_x(\cdot)) - \bar\mu^n_s(h_x(\cdot))|^2\right]^{1/2} \diff s\\
    & + B\int_0^t \mathbb{E}_{x\sim\pi}\left[|\bar\mu^n_s(h_x(\cdot)) - [\theta^{i,\infty}_s](h_x(\cdot))|^2\right]^{1/2} \diff s
\end{aligned}
\end{equation}
where $\bar\mu^n_s := \frac{1}{n}\sum\limits_{i=1}^n \delta_{\theta^{i,\infty}_s}$, the empirical measure of $\theta^{i,\infty}_s$ for $i = 1,\ldots,n$. the last inequality follows from Cauchy-Schwarz inequality.
 Moreover we have
\begin{align*}
\mathbb{E}_{x\sim\pi}[|\mu^n_s(h_x(\cdot)-\bar\mu^n_s(h_x(\cdot))|^2]^{1/2}  \le & \mathbb{E}_{x\sim\pi}\left[\left|\frac{\|\nabla h_x\|\infty}{n} \sum_{i=1}^n \|\theta^{i,n}_s - \theta^{i,\infty}_s\|\right|^2\right]^{1/2} \\
\le & \mathbb{E}_{x\sim\pi}[\Psi^2(x)] ^{1/2}\cdot \frac{1}{n}\sum_{i=1}^n \|\theta^{i,n}_s - \theta^{i,\infty}_s\| \\
\le & B\sup_{i\le n}\|\theta^{i,n}_s - \theta^{i,\infty}_s\|.
\end{align*}
Plugging the above estimate into \eqref{eq:PoC_1} yields
\begin{equation}\label{eq:PoC_2}
   \|\theta^{i,n}_t - \theta^{i,\infty}_t\|  \le  B(1+B)\int_0^t \sup_{i\le n}\|\theta^{i,n}_s - \theta^{i,\infty}_s\| \diff s + B\int_0^t \mathbb{E}_{x\sim\pi}\left[|\bar\mu^n_s(h_x(\cdot)) - [\theta^{i,\infty}_s](h_x(\cdot))|^2\right]^{\frac{1}{2}} \diff s.
\end{equation}

Taking the supremum over $i=1,\ldots,n$ and t, and using the fact that 
\[
\sup\int_\cdot(\cdot) \le \int_\cdot \sup(\cdot),
\]
we get
\begin{equation}\label{eq:PoC_3}
\begin{aligned}
\sup_{t\le T}\sup_{i\le n}\|\theta^{i,n}_t - \theta^{i,\infty}_t\| \le  & B(1+B)\int_0^T \sup_{t\le s}\sup_{i\le n}\|\theta^{i,n}_t - \theta^{i,\infty}_t\|\diff s \\
& + B\int_0^t \mathbb{E}_{x\sim\pi}\left[|\bar\mu^n_s(h_x(\cdot)) - [\theta^{i,\infty}_s](h_x(\cdot))|^2\right]^{1/2} \diff s.
\end{aligned}
\end{equation}

Let us now estimate $\expec{|\bar\mu^n_s(h_x(\cdot)) - [\theta^{i,\infty}_s](h_x(\cdot))|^2 | x}^{1/2}$, the expectation under the stable diffusion, rather than the expectation over the data distribution, where the $1/\sqrt{n}$ convergence rate comes from. Indeed for fixed $x$, $h_x(\theta^{i,\infty}_s)$, $i=1,\ldots,n$ are bounded i.i.d. random variables with mean value $[\theta^{i,\infty}_s](h_x(\cdot))$. Therefore 
\begin{equation}\label{eq:PoC_4}
\begin{aligned}
    \expec{|\bar\mu^n_s(h_x(\cdot)) - [\theta^{i,\infty}_s](h_x(\cdot))|^2 | x}^{1/2} =
    & \expec{\left.\left|\frac{1}{n}\sum_{i=1}^n h_x(\theta^{i,\infty}_s) - [\theta^{i,\infty}_s](h_x(\cdot))\right|^2 \right| x}^{1/2} \\ \le
    & \frac{1}{\sqrt{n}} \|h_x(\cdot)\|_\infty  \le \frac{\Psi(x)}{\sqrt{n}}.
\end{aligned}
\end{equation}
Finally, combining \eqref{eq:PoC_3}, \eqref{eq:PoC_4}, the integrability condition Lemma \ref{lem:integrability} and using Fubini's theorem, we get
\[
\expec{\sup_{r\le t}\sup_{i\le n}\|\theta^{i,n}_r - \theta^{i,\infty}_r\|} 
\le  B(1+B)\int_0^t \expec{\sup_{r\le s}\sup_{i\le n}\|\theta^{i,n}_r - \theta^{i,\infty}_r\|} \diff s  + \frac{Bt\mathbb{E}_{x\sim\pi}[\Psi(x)]}{\sqrt{n}}.
\]
Finally, by Gronwall's inequality we get
\[
\expec{\sup_{t\le T}\sup_{i\le n}\|\theta^{i,n}_t - \theta^{i,\infty}_t\|} 
\le (1+B)\left(\frac{BT}{\sqrt{n}} + \frac{B^2T^2\exp(B T(1+\mathbb{E}_{x\sim\pi}[\Psi(x)]))}{2\sqrt{n}} \right).
\]
This completes the proof of Theorem~\ref{thm:poc}.
\end{proof}

\subsection{Proof of Theorem~\ref{thm:euler}}
\begin{proof}
 Similar to in the proof of Theorem~\ref{thm:poc}, we have
\begin{align*}
    \sup_{i\le n}\|\theta^{i,n}_\eta - \hat\theta^{i,n}_1\|    \le 
    & \sup_{i\le n}\int_0^\eta\| b(\theta^{i,n}_t,\mu^{n}_t) - b(\hat\theta^{i,n}_0, \mu^{n}_0) \| \diff t \\ \le
    &  B\int_0^\eta \sup_{i\le n}\| \theta^{i,n}_t - \theta^{i,n}_0\| + \frac{1}{n} \sum_{j=1}^n \| \theta^{j,n}_t - \theta^{j,n}_0 \| \diff t \\ \le
    &  B\int_0^\eta 2\|b\|_\infty \cdot t + \sup_{i\le n}\|\mathrm{L}^{i,n}_t\| + \frac{1}{n}\sum_{j=1}^n \|\mathrm{L}^{j,n}_t\| \diff t 
\end{align*}
Recall that $\|b\|_\infty\le B$, therefore by taking the expectation and the scaling of the stable process $\mathrm{L}^{i,n}_t$, we get
\begin{equation}\label{eq:error_est_1step}
\begin{aligned}
    \expec{\sup_{i\le n}\|\theta^{i,n}_\eta - \hat\theta^{i,n}_1\|}  \le 
    &B \int_0^\eta \left(2Bt + t^{1/\alpha}\cdot \expec{\sup_{i\le n}\|\mathrm{L}^{i,n}_1\| + \frac{1}{n}\sum_{j=1}^n \|\mathrm{L}^{j,n}_1 \|}\right)\diff t \\ \le
    & B^2\eta^2+ \frac{B\alpha \cdot \expec{\sup_{i\le n}\|\mathrm{L}^{i,n}_1\| + \|\mathrm{L}^\alpha_1\|} }{\alpha+1} \eta^{1+1/\alpha}.
\end{aligned}
\end{equation}
Denote by $C' := \expec{\sup_{i\le n}\|\mathrm{L}^{i,n}_1\| + \|\mathrm{L}^\alpha_1\|}$ and $\psi_t(\xi)$ the solution of \eqref{eq:SGF} at time $t$ with initial condition $\xi \in \R^{n \times d}$, which is the matrix of $n$ vectors $\psi^{i,n}_t(\xi) \in \R^d$, $i=1, \ldots, n$. At time $T$ which is a multiple of $\eta$,
\begin{equation}\label{eq:error_est_sum}
     \theta^{i,n}_T - \hat\theta^{i,n}_{T/\eta} = \sum_{k=0}^{T/\eta-1} \psi^{i,n}_{T-\eta k}(\hat\Theta^n_k) - \psi^{i,n}_{T-\eta(k+1)}(\hat\Theta^n_{k+1}),
\end{equation}
where $\hat\Theta^n_k$ is the matrix of $\hat\theta^{i,n}_k$. Similarly, for each of the terms inside the summation above,
\begin{equation}\label{eq:error_est_pf_1}
\begin{aligned}
    \psi^{i,n}_{T-\eta k}(\hat\Theta^n_k) - \psi^{i,n}_{T-\eta(k+1)} (\hat\Theta^n_{k+1}) 
    =& \left[ \int_{\eta k}^{\eta (k+1)} b^{i,n}(\psi_{t-\eta k}(\hat\Theta^n_k))\diff t + \diff\mathrm{L}^{i,n}_t  - (\hat\theta^{i,n}_{k+1}-\hat\theta^{i,n}_k) \right] \\
    & - \int_{\eta(k+1)}^T \left(b^{i,n}(\psi_{t-\eta k}(\hat\Theta^n_k)) - b^{i,n}(\psi_{t-\eta(k+1)}(\hat\Theta^n_{k+1}))\right)\diff t .
\end{aligned}
\end{equation}
Note that the first term in the big bracket is the difference of one-step increment started from $\hat\Theta^n_k$. It follows from \eqref{eq:error_est_1step} that
\begin{equation}\label{eq:error_est_pf_2} 
\begin{aligned}
\expec{\sup_{i\le n}\left\|\int_{\eta k}^{\eta (k+1)} b^{i,n}(\psi_t(\hat\Theta^n_k)) \diff t + \diff\mathrm{L}^{i,n}_t  - (\hat\theta^{i,n}_{k+1}-\hat\theta^{i,n}_k) \right\|}
\le &  B^2\eta^2+ \frac{B\alpha \cdot C'}{\alpha+1} \eta^{1+1/\alpha}. 
\end{aligned}
\end{equation}
For the second integral term, similarly we have
\begin{equation}\label{eq:error_est_pf_3}
\begin{aligned}
&\expec{\sup_{i\le n}\|b^{i,n}(\psi_{t-\eta k}(\hat\Theta^n_k)) - b^{i,n}(\psi_{t-\eta(k+1)}(\hat\Theta^n_{k+1}))\|} \\
\le & B \cdot \expec{\sup_{i\le n}\|\psi^{i,n}_{t-\eta k}(\hat\Theta^n_k)-\psi^{i,n}_{t-\eta (k+1)}(\hat\Theta^n_{k+1})\|} + \frac{B}{n}\sum_{j=1}^n \expec{\|\psi^{j,n}_{t-\eta k}(\hat\Theta^n_k)-\psi^{j,n}_{t-\eta (k+1)}(\hat\Theta^n_{k+1})\|}
\end{aligned}
\end{equation}
Combining \eqref{eq:error_est_pf_1}, \eqref{eq:error_est_pf_2} and~\eqref{eq:error_est_pf_3} we get
\begin{align*}
    & \mathbb{E}\left[\sup_{i\le n}\|\psi^{j,n}_{T-\eta k}(\hat\Theta^n_k)-\psi^{j,n}_{T-\eta (k+1)}(\hat\Theta^n_{k+1})\| \right] \\
    \le & B^2\eta^2+ \frac{2B\alpha \cdot C'}{\alpha+1} \eta^{1+1/\alpha} +  2B\cdot \int_{\eta(k+1)}^T \expec{\sup_{i\le n}\|\psi^{i,n}_{t-\eta k}(\hat\Theta^n_k)-\psi^{i,n}_{t-\eta (k+1)}(\hat\Theta^n_{k+1})\|} \diff t.
\end{align*}
Next it follows from Gronwall's inequality that 
\[
\expec{\sup_{i\le n}\|\psi^{j,n}_{T-\eta k}(\hat\Theta^n_k)-\psi^{j,n}_{T-\eta (k+1)}(\hat\Theta^n_{k+1})\|}
 \le  \exp(2BT)\left(B^2\eta^2+ \frac{2B\alpha \cdot C'}{\alpha+1} \eta^{1+1/\alpha}\right).
\]
Finally, combining with \eqref{eq:error_est_sum} we obtain
\[
   \expec{ \sup_{i\le n}\|\theta^{i,n}_T - \hat\theta^{i,n}_{T/\eta}\|}
   \le  T\exp(2BT)\left(B^2\eta+ \frac{2B\alpha \cdot C'}{\alpha+1}\eta^{1/\alpha} \right).
\]
Then it follows by Lemma \ref{lm:max_n_stable} that for some constant $C'_\alpha$ that depends on $\alpha$, we have
\[
C' = \expec{\sup_{i\le n}\|\mathrm{L}^{i,n}_1\| + \|\mathrm{L}^\alpha_1\|} \le C'_\alpha (n^{1/\alpha}+1).
\]
This completes the proof of Theorem \ref{thm:euler}.
\end{proof}

\begin{lemma}\label{lm:max_n_stable}
 Take $n$ i.i.d. $\alpha$-stable random variables $X^i$ such that there exists $C_\alpha >0$, for $t$ sufficiently large and $i=1,\ldots,n$, $\mathbb{P}[\|X^i\|\ge t] \ge C_\alpha t^{-\alpha}.$ If $1<\alpha<2$, then there exists $C'_\alpha$ such that for $n$ sufficiently large,
    \[
     \expec{\sup_{i\le n}\|
     X^i \|} \le  C'_\alpha n^{1/\alpha}
    \]
\end{lemma}

\begin{proof}
It is not difficult to see from the condition $
    \mathbb{P}[\|X^i\|\ge t] \ge C_\alpha t^{-\alpha}$ that for large $t$, it holds that
\[
 \mathbb{P}\left[\sup_{i\le n}\|X^i\| \ge t\right] = 1 -\prod_{i=1}^n \mathbb{P}[\|X^i\| <t] \le
 1- \left( 1-C_\alpha t^{-\alpha}\right)^n\le C_\alpha n t^{-\alpha}.
\]
Then, for large $n$ we get
\begin{align*}
     \expec{\sup_{i\le n}\|X^i\|}
    =&\int_0^\infty \mathbb{P}\left[\sup_{i\le n}\|X^i\|\ge t \right] \mathrm{d}t \\
    =& \sum_{k=-1}^{-\infty}\int_{(n/2^{k+1})^{1/\alpha}}^{(n/2^k)^{1/\alpha}} \mathbb{P}\left[\sup_{i\le n}\|X^i\|\ge t \right] \mathrm{d}t + \int_0^{n^{1/\alpha}} \mathbb{P}\left[\sup_{i\le n}\|X^i\|\ge t \right] \mathrm{d}t \\ 
    \le&  n^{1/\alpha} \sum_{k=-1}^{-\infty} 2^{-k/\alpha}\mathbb{P}\left[\sup_{i\le n}\|X^i\| \ge (n/2^{k+1})^{1/\alpha}\right] + n^{1/\alpha}\\
    \le&  C_\alpha n^{1/\alpha} \sum_{k=-1}^{-\infty} 2^{k+1-k/\alpha} + n^{1/\alpha}\\
    \le& C'_\alpha n^{1/\alpha}
\end{align*}
where in the last inequality we set $C'_\alpha=1+2^{1+1/\alpha}/(2-2^{1/\alpha})$. This completes the proof of Lemma \ref{lm:max_n_stable}.
\end{proof}

\subsection{Proof of Theorem~\ref{thm:main}}

\begin{definition}[$k$-term approximation error \citep{compressible_distribution}] 
    The best $k$-term approximation error $\overline{\sigma}_k(\mathbf{x})$ of a vector $\mathbf{x}$ is defined by
    \[
    \sigma_k(\mathbf{x}) = \inf_{\|\mathbf{y}\|_0\le k} \|\mathbf{x} - \mathbf{y}\|,
    \]
    where $\|\mathbf{y}\|_0$ is the $l^0$-norm of $\mathbf{y}$, which counts the non-zero coefficients of $\mathbf{y}$. Without mentioned explicitly, $\|\mathbf{x}\|$ denotes the square norm of $\mathbf{x}$.
\end{definition}

% $\eta \le n^{-\alpha/2-1}$
% $\eta k\le T$:
% \[
% \expec{\sup_{i\le n} \|\theta^{i,n}_t-\hat\theta^{i,n}_{\lfloor t/\eta \rfloor}\|} \le C (\eta n)^{1/\alpha}.
% \]

\begin{proof} 
Denote by $\hat{\mathbf{w}}^n_t = (\|\hat\theta^{1,n}_{\lfloor t/\eta \rfloor}\|, \ldots, \|\hat\theta^{n,n}_{\lfloor t/\eta \rfloor}\|)$ and $\mathbf{w}^*_t = 
 (\|\theta^{1,\infty}_t\|, \ldots, \|\theta^{n,\infty}_t\|)$, where the components $\theta^{i,\infty}_t$ are independent solutions to \eqref{eq:McKean–Vlasov} in Theorem \ref{thm:poc}. Note that the definition of Frobenius matrix norm $\|\cdot\|_F$ gives that
\begin{equation}\label{eq:Frob_norm}
\|\hat\Theta^{\{\kappa n\}}_{\lfloor t/\eta \rfloor} - \hat\Theta^n_{\lfloor t/\eta \rfloor}\|_F = \|\sigma_{\langle \kappa n \rangle}(\hat{\mathbf w}^n_t)\|, \quad \|\hat\Theta^n_{\lfloor t/\eta \rfloor} \|_F = \|\mathbf w^\star_t\|,
\end{equation}
Therefore it suffices to prove Theorem \ref{thm:main} for $\hat{\mathbf{w}}^n_t$. It follows from Theorem \ref{thm:poc} and Theorem \ref{thm:euler} that there exists a constant C independent of $n$ for which
\[
\expec{\sup_{i\le n} \|\hat\theta^{i,n}_{\lfloor t/\eta \rfloor} - \theta^{i,\infty}_t\|} \le \frac{C}{3\sqrt{n}}  
\]
Then by the Markov's inequality we get
\begin{equation}\label{eq:PoC_proba}
\mathbb{P}\left[\sup_{i\le n}\|\hat\theta^{i,n}_{\lfloor t/\eta \rfloor} - \theta^{i,\infty}_t\| > \frac{C}{\epsilon\sqrt{n}}\right] \le \epsilon/3.
    \end{equation}
Denote by $E$ the event 
\[
E := \left\{\sup_{i\le n}\|\hat\theta^{i,n}_{\lfloor t/\eta \rfloor} - \theta^{i,\infty}_t\| \le \frac{C}{\epsilon\sqrt{n}} \right\}.
\]
If $\sup_{i\le n}\|\hat\theta^{i,n}_{\lfloor t/\eta \rfloor} - \theta^{i,\infty}_t\| \le \frac{C}{\epsilon\sqrt{n}}$ and $\|\sigma_{\lfloor\kappa n\rfloor}(\hat{\mathbf{w}}^n_t)\| \ge \epsilon\|\hat{\mathbf{w}}^n_t\|$,  we obtain
\begin{align*}
\|\sigma_{\lfloor \kappa n \rfloor}(\mathbf{w}^\star_t)\| 
&\ge \|\sigma_{\lfloor \kappa n \rfloor}(\hat{\mathbf{w}}^n_t)\| - \kappa n\frac{C}{\epsilon\sqrt{n}} \\
&\ge \epsilon\|\hat{\mathbf{w}}^n_t\|- C\sqrt{n}\kappa/\epsilon \\
&\ge \epsilon(\|\mathbf{w}^\star_t\| -  C\sqrt{n}/\epsilon) - C\sqrt{n}\kappa /\epsilon \\
&= \epsilon \|\mathbf{w}^\star_t\| -  C\sqrt{n}(1 + \kappa/\epsilon)
\end{align*}
Therefore plugging in \eqref{eq:PoC_proba}, we get
\begin{equation}\label{eq:compression_1}
\begin{aligned} 
&\mathbb{P}\big[\|\sigma_{\lfloor \kappa n \rfloor}(\hat{\mathbf{w}}^n_t)\| \ge \epsilon\|\sigma_{\lfloor \kappa n \rfloor}(\hat{\mathbf{w}}^n_t)\|\big] \\ 
\le & \mathbb{P}\big[ \|\sigma_{\lfloor \kappa n \rfloor}(\hat{\mathbf{w}}^n_t)\| \ge \epsilon\|\sigma_{\lfloor \kappa n \rfloor}(\hat{\mathbf{w}}^n_t)\|, E^c \big] + \mathbb{P}\big[\|\sigma_{\lfloor \kappa n \rfloor}(\hat{\mathbf{w}}^n_t)\| \ge \epsilon\|\sigma_{\lfloor \kappa n \rfloor}(\hat{\mathbf{w}}^n_t)\|, E \big]\\
\le & \mathbb{P}\left[\sup_{i\le n}\|\hat\theta^{i,n}_{\lfloor t/\eta \rfloor} - \theta^{i,\infty}_t\| > \frac{C}{\epsilon\sqrt{n}}\right] + \mathbb{P}\left[ \|\sigma_{\lfloor \kappa n \rfloor}(\mathbf{w}^\star_t)\| \ge \epsilon \|\mathbf{w}^\star_t\|-C\sqrt{n}(1+\kappa/\epsilon)\right] \\
\le &\epsilon/3 + \mathbb{P}\left[ \|\sigma_{\lfloor \kappa n \rfloor}(\mathbf{w}^\star_t)\| \ge \epsilon \|\mathbf{w}^*_t\|-C\sqrt{n}(1+\kappa/\epsilon)\right]
\end{aligned}
\end{equation} 

Moreover, there exists $N'>0$ such that for all $n\ge N'$,
\begin{equation}\label{eq:compression_2}
\begin{aligned}
&\mathbb{P}\left[ \|\sigma_{\lfloor \kappa n \rfloor}(\mathbf{w}^\star_t)\| \ge \epsilon \|\mathbf{w}^*_t\|-C\sqrt{n}(1+\kappa/\epsilon)\right]\\
\le & \mathbb{P}\left[ \|\mathbf{w}^\star_t\| \le 2C\sqrt{n}(1+\kappa/\epsilon)\right] 
+ \mathbb{P}\left[\|\sigma_{\lfloor \kappa n \rfloor}(\mathbf{w}^\star_t) \|\ge \frac{\epsilon}{2}\|\mathbf{w}^\star_t\| \right] \\
= & \mathbb{P}\left[ \frac{1}{n}\|\mathbf{w}^\star_t\|^2 \le 4C^2(1+\kappa/\epsilon)^2\right] 
+ \mathbb{P}\left[\|\sigma_{\lfloor \kappa n \rfloor}(\mathbf{w}^\star_t) \|\ge \frac{\epsilon}{2}\|\mathbf{w}^\star_t\| \right] \\
\le & \epsilon/3 + \mathbb{P}\left[\|\sigma_{\lfloor \kappa n \rfloor}(\mathbf{w}^\star_t) \|\ge \frac{\epsilon}{2}\|\mathbf{w}^\star_t\| \right],
\end{aligned}
\end{equation}
where the last inequality follows from Lemma \ref{lm:avr_inf}. By the independence of the $n$ coordinates of the vetor $\mathbf{w}^\star_t$, Theorem \ref{thm:heavy-tail} and [GCD12, Proposition 1, Part 2], there exists $N''>0$, for all $n\ge N''$,
\begin{equation}\label{eq:compression_3}
\mathbb{P}\left[\|\sigma_{\lfloor \kappa n \rfloor}(\mathbf{w}^\star_t) \|\ge \frac{\epsilon}{2}\|\mathbf{w}^\star_t\| \right] \le \epsilon/3.
\end{equation}
Finally, combining~\eqref{eq:Frob_norm}, \eqref{eq:compression_1}, \eqref{eq:compression_2} and \eqref{eq:compression_3} terminates the proof.
\end{proof}

\section{Experimental Details and Additional Results}
\label{sec:app:exp}
%{\color{red}prepare the code on github}

\subsection{Software and Hardware Requirements}
The experiments have been implemented in Python, using the deep learning framework PyTorch. Experiments were run on the server of an educational institution, using NVIDIA 1080 and 1080 Ti GPUs. The experiments published in the main paper and the Appendix amounted to an estimated GPU time of 1200 hours in total. Pruning and analysis is estimated to have taken an additional 40 GPU hours. We provide instructions to replicate and explore our results in the source code: \url{https://github.com/mbarsbey/implicit-compressibility}.

\subsection{Datasets}
The ECG5000 dataset \citep{ECG5000} consists of 5000 20-hour long electrocardiograms interpolated by sequences of length 140 to discriminate between
normal and abnormal heart beats of a patient that has severe congestive heart failure. After random shuffling, we use 500 sequences for the training phase and 4500 sequences for the test phase. The MNIST database \citep{mnist2010} of black and white handwritten digits consists of a training set of 60,000 examples and a test set of 10,000 examples of dimensions 28 x 28. CIFAR10 and CIFAR100 are two other image classification datasets \citep{cifar102009}, including 32 x 32 x 3 color images of objects or animals, making up 10 and 100 classes, respectively. We use the default split of 50,000 training and 10,000 test examples.

\subsection{Models and Training Hyperparameters}
The models used in the experiments are fully connected networks (FCN) and convolutional neural networks (CNN). All models include ReLU activations, and do not include any bias nodes nor any advanced layer structures such as batch normalization or residual connections. Due to the number of parameters being low compared to other layers, last linear layers of the models are not added noise during training, and are not included in pruning or computation of pruning ratios during evaluation. As described in the paper, we use FCNs with 1 or 4 hidden layers in different experiments. The CNN used in the experiments is a modified version of VGG11 \citep{simonyanVeryDeepConvolutional2015}, and has the following structure 
$$
128, M, 256, M, 512, 512, M, 1024, 1024, M, 1024, 1024, M,
$$
where the numbers refer to convolutional layer widths with 3 x 3 filters, followed by ReLU activation functions, and $M$'s refer to 2 x 2 max pooling operations.

All models in all experiments are trained until 95\% training accuracy, after which the training is concluded. As described in the main paper, no adaptive optimizers has been used in any of the experiments. For each experiment, $\sigma$ values have been selected to be as large as possible, without incurring dramatic performance loss and/or divergence on the un/pruned trained network. Learning rates, batch sizes, and $\sigma$ values have been provided in the Table \ref{tbl:hyperparameters}. Note that for CIFAR10, FCN experiments in Tables \ref{tbl:cifar10_1_5k}, we increase $\sigma$ to $0.0003$ to further illustrate the effects of noise added training on parameter compressibility. We also highlight that the batch sizes for CIFAR10, CNN and CIFAR100 experiments have been selected to be considerably smaller to other MNIST and CIFAR10 experiments, due to the former being more memory intensive. Given the very limited additional computational overhead of our method, our approach can easily be combined with standard hyperparameter selection methods. 

\begin{table}[h!]
\centering
\resizebox{\textwidth}{!}{
\begin{tabular}{ |c|c  c c c c |}
 \hline
Experiment & LR & B & $\sigma \ (\alpha=1.75)$ & $\sigma \ (\alpha=1.8)$ & $\sigma \ (\alpha=1.9)$ \\
\hline\hline
ECG5000, FCN, $n=2$K, Type-I & 0.0001 & 500 & 0.50 & 0.30 & 0.25 \\
\hline
ECG5000, FCN, $n=10$K, Type-I & 0.0001 & 500 & 1.25 & 1.00 & 0.75 \\
\hline
ECG5000, FCN, $n=10$K, Type-II & 0.0001 & 500 & 1.50& 1.75 & 2.25 \\
\hline
ECG5000, FCN, $n=10$K, Type-III & 0.0001 & 500 &3.00 &3.00  &2.00  \\
\hline
MNIST, FCN, $n=5$K, Type-I & 0.25 & 5000 & 0.001 & 0.00125 &  0.0011\\
\hline
CIFAR10, FCN, Type-I & 0.10 & 5000 & 0.0001 & 0.0001  & 0.0001 \\
\hline
% CIFAR10, FCN, Type-I (Rob.) & 0.10 & 5000 & 0.003 &  0.003 &  0.003 \\
% \hline
CIFAR10, CNN, Type-I & 0.01 & 100 &0.000075 & 0.0001 & 0.000075 \\
\hline
CIFAR100, FCN, Type-I & 0.01 & 100 &0.00005 & 0.000075 & 0.00009 \\
\hline

\end{tabular}
}
\caption{Hyperparameters for the experiments presented in the main paper and the appendices, including learning rate (LR), batch size (B), and chosen noise scales ($\sigma$) for various noise tail indices ($\alpha$).}
\label{tbl:hyperparameters}
\end{table}

\section{Implications on Federated Learning}

The federated learning (FL) setting~\citep{mcmahan2017,fedlearn2017} is one in which there are a number of devices or clients, say $n$; all equipped with the same neural network model and each holding an independent own dataset. Every client learns an individual (or local) model from its own dataset, e.g., via Stochastic Gradient Descent (SGD). The individual models are aggregated by a \textit{parameter server} (PS) into a global model and then sent back to the devices, possibly over multiple rounds of communication between them. The rationale is that the individually learned models are
refined progressively by taking into account the data held by other devices; and, at the end the training process, all relevant features of all devices' datasets are captured by the final aggregated model.

The results of this paper are useful towards a better understanding of the \textit{compressibility} of the models learned by the various clients in this FL setting. Specifically,  viewing each neuron of the hidden layer of the setup of this paper as if it were a distinct client, the results that we  establish \textit{suggest} that if the local models are learned via heavy-tailed SGD this would enable a better compressibility of them. This is particularly useful for resource-constrained applications of FL, such as in telecommunication networks where bandwidth is scarce and latency is important.

% \section{Illustration of Noise Types}

% \begin{figure}[h!]
% \centering
%      \includegraphics[width=0.4\textwidth]{alpha_stable_noises.png}
%      \caption{Samples of $1000$ planar $\alpha$-stable random vectors. Type-I noises lie in one direction; Type-II noises are i.i.d. distributed in the direction of x and y axis; Type-III noises has the same distribution in all directions.}
%      \label{fig:enter-label}
% \end{figure}
%%%%%%%%%%%%%%%%%%%%%%%%%%%%%%%%%%%%%%%%%%%%%%%%%%%%%%%%%%%%%%%%%%%%%%%%%%%%%%%
%%%%%%%%%%%%%%%%%%%%%%%%%%%%%%%%%%%%%%%%%%%%%%%%%%%%%%%%%%%%%%%%%%%%%%%%%%%%%%%

\end{document}